\newtheorem{define}{Definition}
\newtheorem{corollary}{Corollary}
\newtheorem{theorem}{Theorem}
\newtheorem*{theorem*}{Theorem}
\title{Estimating Categorical Counterfactuals via Deep Twin Networks}
\author[1,4]{\href{mailto:<av2514@ic.ac.uk>?Subject=Your Twin Nets paper}{Athanasios Vlontzos}}
\author[1,2]{Bernhard Kainz}
\author[3,4]{Ciar\'{a}n M. Gilligan-Lee}
\affil[1]{%
   Imperial College London,
    London, UK
}
\affil[2]{%
    FAU Erlangen-Nuremberg, 
    Erlangen, Germany
}
\affil[3]{%
    University College London,
    London, UK
  }
\affil[4]{%
    Spotify,
    London, UK
  }  
\begin{document}
\maketitle

\begin{abstract}
Counterfactual inference is a powerful tool, capable of solving challenging problems in high-profile sectors. To perform counterfactual inference, one requires knowledge of the underlying causal mechanisms. However, causal mechanisms cannot be uniquely determined from observations and interventions alone. This raises the question of how to choose the causal mechanisms so that resulting counterfactual inference is trustworthy in a given domain. This question has been addressed in causal models with binary variables, but the case of categorical variables remains unanswered. We address this challenge by introducing for causal models with categorical variables the notion of \emph{counterfactual ordering}, a principle that posits desirable properties causal mechanisms should posses, and prove that it is equivalent to specific functional constraints on the causal mechanisms. To learn causal mechanisms satisfying these constraints, and perform counterfactual inference with them, we introduce \emph{deep twin networks}. These are deep neural networks that, when trained, are capable of \emph{twin network} counterfactual inference---an alternative to the \emph{abduction, action, \& prediction} method. We empirically test our approach on diverse real-world and semi-synthetic data from medicine, epidemiology, and finance, reporting accurate estimation of counterfactual probabilities while demonstrating the issues that arise with counterfactual reasoning when counterfactual ordering is not enforced.\footnote{ Corresponding author: Athanasios Vlontzos athanasiosv@spotify.com; Further correspondence can be directed to Bernhard Kainz: b.kainz@imperial.ac.uk ; Ciar\'{a}n M. Gilligan-Lee: ciaranl@spotify.com}

\end{abstract}

\section{Introduction}


``If my credit score had been better, would I have been approved for this loan?'',  ``What is the effect of the diabetes type on the risk of stroke?''. Causal questions like these are routinely asked by scientists and the public alike. Recent machine learning advances have enabled the field to address causal questions in high-dimensional datasets to a certain extent  \cite{schwab2018perfect,alaa2017deep,shi2019adapting}.
However, most of these methods focus on {\tt Interventions}, which only constitute the second-level of Pearl's three-level causal hierarchy \cite{Pearl2009,bareinboim20201on}. At the top of the hierarchy sit {\tt Counterfactuals}. These subsume interventions and allow one to assign fully causal explanations to data. 

Counterfactuals investigate alternative outcomes had some pre-conditions been different. The crucial difference between counterfactuals and interventions is that the evidence the counterfactual is ``counter-to'' can contain the variables we wish to intervene on or predict. The first question posed at the start of this paper, for instance, is a counterfactual one. Here we want to know if improving our credit score will lead to loan approval in the explicit context that the loan has just been declined. A corresponding interventional query would be ``what is the impact of the credit score on the loan approval chances?''. Here, evidence that the loan has just been denied is \emph{not} used in estimating the impact. The second question posed at the start of this paper---regarding the effect of diabetes type on the risk of stroke---is an interventional question. By utilising this additional information, counterfactuals enable more nuanced and personalised reasoning and decision making. Counterfactual inference has been applied in high profile sectors like medicine \cite{richens2020improving, oberst2019counterfactual}, legal analysis \cite{lagnado2013causal}, fairness \cite{kusner2017counterfactual}, explainability \cite{galhotra2021explaining}, 
and  advertising \cite{ang2019unit}. 


To perform counterfactual inference, one requires knowledge of the causal mechanisms. However, the causal mechanisms cannot be uniquely determined from observations and interventions alone. Indeed, two causal models that have the same conditional and interventional distributions can disagree about certain  counterfactuals \cite{Pearl2009}. Hence, without additional constraints on 
the form of the causal mechanisms, 
they can generate ``non-intuitive" counterfactuals that conflict with domain knowledge, as originally pointed out by \cite{oberst2019counterfactual}. 

This raises the question of how best to choose the causal mechanisms so that resulting counterfactual inference is trustworthy in a given domain. Despite the importance of counterfactual inference, this question has only been addressed in causal models with binary treatment and outcome variables \cite{tian2000probabilities}. The case of categorical variables remains unanswered. Beyond binary variables, previous work has only derived upper and lower bounds for counterfactual probabilities \cite{Zhang2020BoundingCE}. In many cases, these bounds can be too wide to be informative.
We address this challenge by introducing for causal models with categorical variables the notion of \emph{counterfactual ordering}, a principle that posits desirable properties causal mechanisms should posses, and prove that it is equivalent to specific functional constraints on the causal mechanisms. Namely, we prove that causal mechanisms satisfying counterfactual ordering must be monotonic functions.

To learn such causal mechanisms, and perform counterfactual inference with them, we introduce \emph{deep twin networks}. These are deep neural networks that, when trained, are capable of \emph{twin network} counterfactual inference---an alternative to the \emph{abduction, action, \& prediction} method of counterfactual inference. Twin networks were introduced by \cite{balke1994counterfactual} and reduce estimating counterfactuals to performing Bayesian inference on a larger causal model, known as a \emph{twin network}, where the factual and counterfactual worlds are jointly graphically represented. Despite their potential importance, twin networks have not been widely investigated from a machine learning perspective. We show that the graphical nature of twin networks makes them particularly amenable to deep learning.

We empirically test our approach on a variety of real and semi-synthetic datasets from medicine and finance, showing our method achieves accurate estimation of counterfactual probabilities. Moreover, we demonstrate that if counterfactual ordering is not enforced, the model generates ``non-intuitive'' counterfactuals that contradict domain knowledge in these cases. Our contributions are as follows:  
\begin{enumerate}
\parskip0pt
    \item We introduce \emph{counterfactual ordering} for causal models with categorical variables, which posits desirable properties causal mechanisms should posses. 
    \item We prove \emph{counterfactual ordering} is equivalent to specific functional constraints on the causal mechanisms. Namely, that they must be monotonic. 
    \item We introduce \emph{deep twin networks} to learn such causal mechanisms and perform counterfactual inference. These are deep neural networks that, when trained, can perform \emph{twin network} counterfactual inference. 
    \item We test our approach on real and semi-synthetic data, achieving \emph{accurate} counterfactual estimation  that complies with domain knowledge.
\end{enumerate}


\section{Preliminaries}

\subsection{Structural causal models} \label{Section: Structural causal models}

While there are many paradigms to discuss causality, for example in explainable artificial intelligence (XAI), epidemiology, adversarial learning and econometrics, we work in the Structural Causal Models (SCM) framework. Through it, one is also able to derive the aforementioned field specific paradigms. As such in an effort to make our work more approachable we choose SCMs. Chapter $7$ of \cite{Pearl2009} gives an in-depth discussion and for an up-to-date review of counterfactual inference and Pearl's Causal Hierarchy, we invite the reader to refer \cite{bareinboim20201on}.

Furthermore, our work belongs to the large body of literature of causal inference. As such, certain assumptions about the availability of knowledge about causal links are made. Contrary to the field of causal discovery, the structural causal models are given and our task is develop methodologies regarding the inference and prediction of outcomes.

\begin{define}[Structural Causal Model] \label{functional causal model}
\label{scmdef}
A structural causal model (SCM) specifies a set of latent variables $U=\{u_1,\dots,u_n\}$ distributed as $P(U)$, a set of observable variables $=\{v_1,\dots, v_m\}$, a directed acyclic graph (DAG), called the \emph{causal structure} of the model, whose nodes are the variables $U\cup V$, a collection of functions $F=\{f_1,\dots, f_n\}$, such that $v_i = f_i(PA_i, u_i), \text{ for } i=1,\dots, n,$ where $PA$ denotes the parent observed nodes of an observed variable.
\end{define}


The collection of functions and distribution over latent variables induces a distribution over observable variables: $P(V=v) := \sum_{\{u_i \mid f_i(PA_i, u_i)\,=\,v_i\}} P(u_i).$ 
An example causal structure, represented as a directed acyclic graph (DAG), is depicted in Fig. \ref{examplescm}. We note that while this is a simplified causal structure our results extend to far more complex ones as those seen in fields like epidemiology.

\begin{define}[Submodel]
Let $M$ be a structural causal model, $X$ a subset of observed variables with realization $x$. A submodel $M_x$ is the causal model with the same latent and observed variables as $M$, but with functions replaced with
$F_{x} = \{f_i \mid v_i\notin X\} \cup \{f_{j}^{'}(PA_j, u_j) := x_j\ |\ v_j \in X\}$.
\end{define}


\begin{define}[do-operator]
Let $M$ be a structural causal model, $X$ a set of observed variables. The effect of action $\text{do}(X=x)$ on $M$ is given by the submodel $M_{x}$.
\end{define}

The $do$-operator forces variables to take certain values, regardless of the original causal mechanism. Graphically, $do(X=x)$ means deleting edges incoming to $X$ and setting $X=x$. Probabilities involving $do(x)$ are normal probabilities in submodel $M_x$: $P(Y=y \mid \text{do}(X=x)) = P_{M_x} (y)$.

\begin{figure}[t] 
\centering

\subfloat[
    \label{examplescm}]{
        \includegraphics[width=0.17\linewidth]{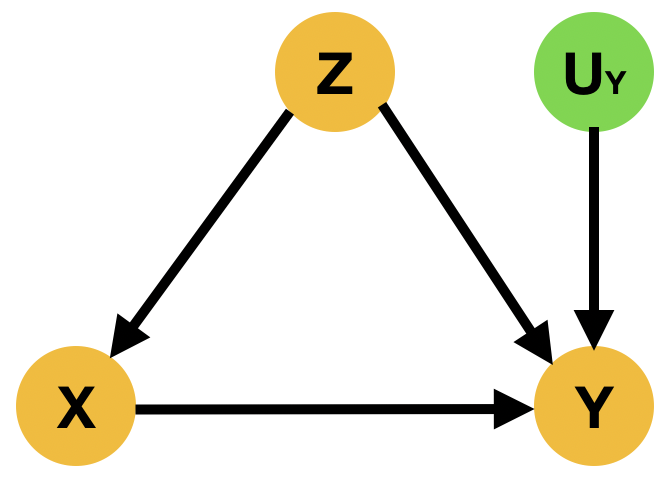}
        }
\subfloat[
    \label{examplescm_twin}]{
        \includegraphics[width=0.17\linewidth]{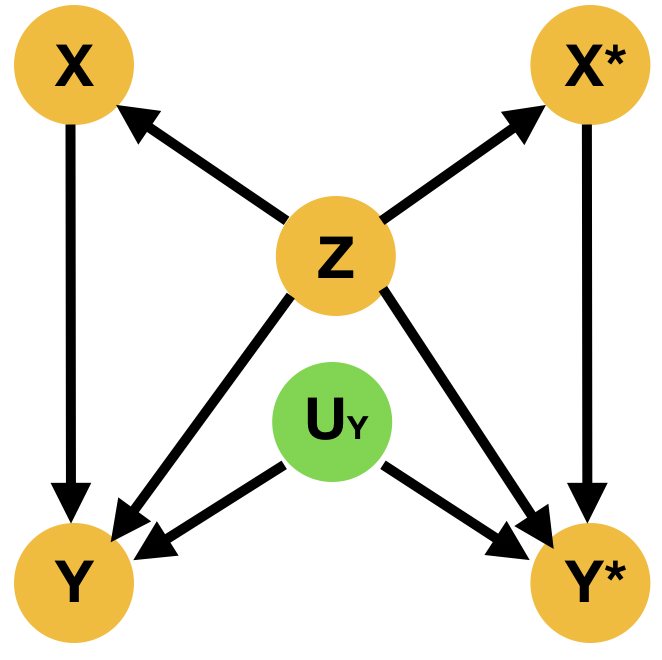}
        }
\subfloat[
    \label{examplescm_twin_do_x}]{
        \includegraphics[width=0.17\linewidth]{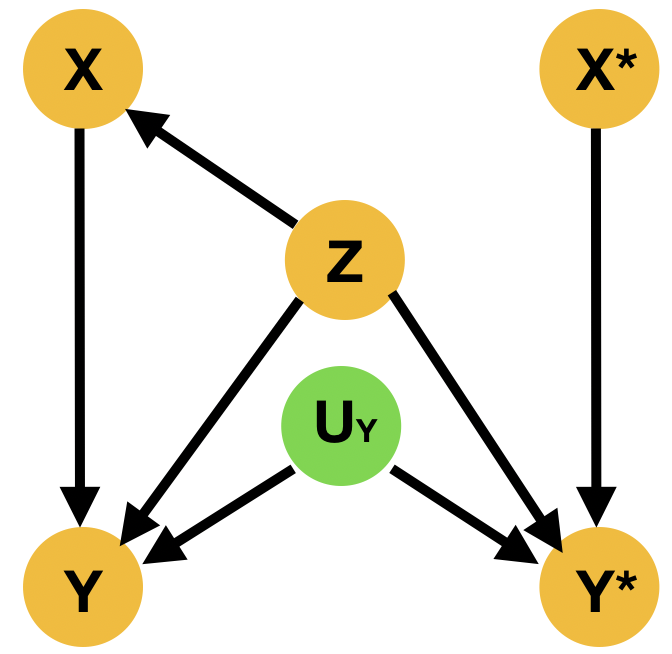}
        }
\subfloat[ 
    \label{examplescm_twin_do_x_2}]{
        \includegraphics[width=0.17\linewidth]{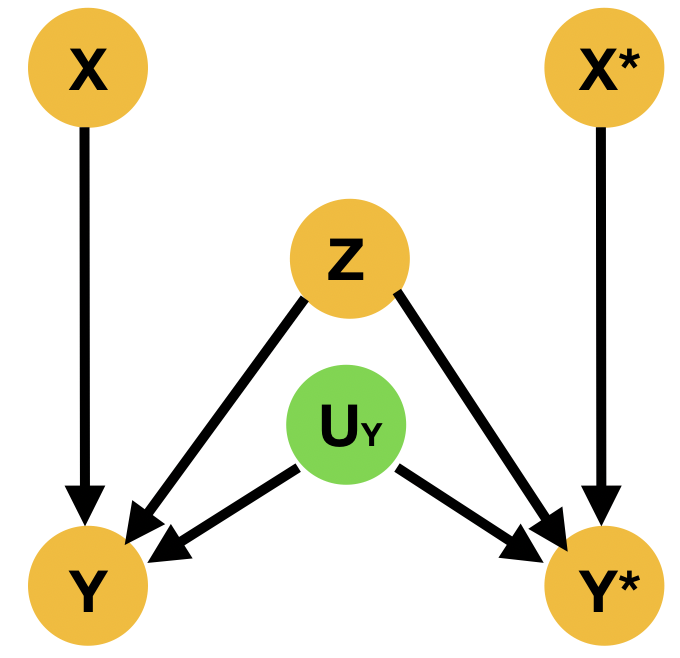}
        }
\subfloat[
    \label{figure: toy example}]{
        \includegraphics[width=0.17\linewidth]{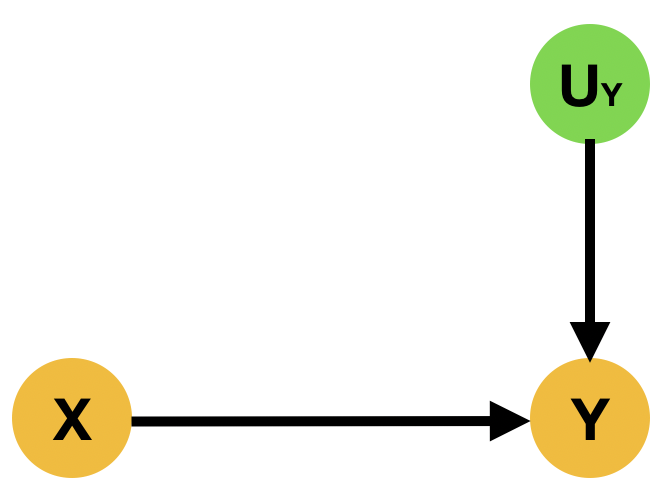}
        }
\caption{Orange nodes are observed, green latent. (a) Example SCM; (b) twin network of (a); (c) intervention in the twin network on node $X^*$; (d) interventions in the twin network on $X$ \& $X^*$; (e) Uncounfounded version of (a).
}
\label{figure: twin networks}
\end{figure}

\subsection{Counterfactual inference} \label{Section: two approaches to counterfactual inference}

\begin{define}[Counterfactual]\label{def:counterfactual}
The counterfactual ``$Y$ would be $y$ in situation $U=u$, had $X$ been $x$'', denoted $Y_{x}(u) = y$, equates to $Y=y$ in submodel $M_x$ for $U = u$.
\end{define}

The latent distribution $P(U)$ allows one to define probabilities of counterfactual queries, $P(Y_{y}=y) = \sum_{u \mid Y_{x}(u)=y} P(u).$ For $x \neq x'$ one can also define joint counterfactual probabilities,
$P(Y_{x}=y, Y_{x'}=y') = \sum_{u \mid Y_{x}(u)=y,\text{ }\& Y_{x'}(u)=y'} P(u).$
Moreover, one can define a counterfactual distribution given seemingly contradictory evidence. Given a set of observed evidence variables $E$, consider the probability $ P(Y_{{x}}={y}' \mid E = {e})$.
Despite the fact that this query may involve interventions that contradict the evidence, it is well-defined, as the intervention specifies a new submodel. Indeed, $ P(Y_{x}={y}' \mid E = {e})$ is given by ~\cite{Pearl2009}
$ \sum_{{u}}
P(Y_{{x}}({u})={y}')P({u}|{e})\,.$
The following theorem outlines how to compute such distributions.

\begin{theorem}[Theorem 7.1.7 in \cite{Pearl2009}] \label{counterfactual theorem}
Given SCM $M$ with latent distribution $P(U)$ and evidence ${e}$, the conditional probability $P(Y_{{x}} \mid {e})$ is evaluated as follows: 1) \textbf{Abduction:} Infer the posterior of the latent variables with evidence ${e}$ to obtain $P(U \mid {e})$, 2) \textbf{Action:} Apply $\text{do}({x})$ to obtain submodel $M_{x}$, 2) \textbf{Prediction:} Compute the probability of $Y$ in the submodel $M_{x}$ with $P(U \mid {e})$.

\end{theorem}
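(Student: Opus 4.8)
The plan is to show that the three-step procedure is a faithful computational realization of the closed-form expression for the counterfactual conditional already quoted in the text, namely
\[
P(Y_{x} = y' \mid E = e) = \sum_{u} P(Y_{x}(u) = y')\, P(u \mid e),
\]
so it suffices to derive this identity and then read off the three steps from it.

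First I would marginalize over the latent variables by the law of total probability, writing
\[
P(Y_{x} = y' \mid e) = \sum_{u} P(Y_{x} = y' \mid U = u,\, e)\, P(u \mid e).
\]
The crux is the inner factor. I would argue that $P(Y_{x} = y' \mid U = u,\, e) = \mathbf{1}\{ Y_{x}(u) = y' \}$, i.e.\ that once the full latent assignment $U = u$ is fixed the factual evidence $e$ carries no further information about the counterfactual outcome. This holds because in an SCM every observable in both the original model $M$ and the submodel $M_{x}$ is a deterministic function of $u$ (the submodel only overrides the mechanism of $X$, leaving the latent variables untouched); hence, conditioned on $U = u$, the counterfactual $Y_{x}$ collapses to the single value $Y_{x}(u)$ defined via $M_x$, independently of $e$. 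Substituting this indicator back recovers the displayed identity.

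With the identity in hand, the three steps match its three ingredients exactly: \textbf{Abduction} computes the posterior $P(u \mid e)$ by conditioning on the factual evidence in $M$; \textbf{Action} replaces $M$ by the submodel $M_{x}$, which is precisely the object that defines the deterministic map $u \mapsto Y_{x}(u)$; and \textbf{Prediction} evaluates $Y$ in $M_{x}$ while averaging over the abducted distribution, i.e.\ computes $\sum_{u} \mathbf{1}\{Y_{x}(u)=y'\}\, P(u \mid e)$. Because the latent distribution is the only source of randomness, updating it to $P(u\mid e)$ and then running it through the intervened mechanisms is the same as the sum above.

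The main obstacle is making the conditional-independence step rigorous, i.e.\ justifying $Y_{x} \perp E \mid U$. Conditioning on the full latent assignment is cleanest here, but in presentations where one conditions on a partial or structured evidence set the argument is less immediate, and I would make it transparent using the twin-network (parallel-worlds) representation of Fig.~\ref{examplescm_twin}: the factual and counterfactual worlds share the latent variables $U$ but carry otherwise separate copies of the observables, so $d$-separation immediately yields the required independence of $Y_{x}$ from $E$ given $U$. Everything else is bookkeeping with the law of total probability.
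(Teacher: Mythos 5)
Your proposal is correct: the reduction of $P(Y_x = y' \mid e)$ to $\sum_{u} P(Y_x(u)=y')\,P(u \mid e)$ via total probability plus the determinism of $Y_x(u)$ given the full latent assignment is exactly the standard argument, and the three steps do read off from that identity. Note that the paper supplies no proof of its own here---it cites the result as Theorem 7.1.7 of \cite{Pearl2009} and simply quotes the same identity immediately before the theorem statement---so your argument coincides with the intended (cited) justification rather than departing from it.
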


\subsection{Twin network counterfactual inference}
\label{section: twin networks}

A practical limitation of Theorem~\ref{counterfactual theorem} is that the abduction step requires large computational resources. Indeed, even if we start with a Markovian model in which background variables are mutually independent, conditioning on evidence---as in abduction---normally destroys this independence and makes it necessary to carry over a full description of the joint distribution over the background variables \cite{Pearl2009}. \cite{balke1994counterfactual} introduced a method to address this difficulty. Their method reduces estimating counterfactuals to performing Bayesian inference on an larger causal model, known as a \emph{twin network}, where the factual and counterfactual worlds are jointly graphically represented, described in technical detail below.  
We have included an extended discussion of the computational distinction between twin networks and abduction-action-prediction in the Appendix. 

A twin network consists of two interlinked networks, one representing the real world and the other the counterfactual world being queried. Constructing a twin network given a structural causal model and using it to compute a counterfactual query is as follows: First, one duplicates the given causal model, denoting nodes in the duplicated model via superscript $^*$. Let $V = \{v_1, \ldots , v_n\}$ be observable nodes in the causal model and $V^{*} = \{v_1^{*}, \ldots, v_n^{*}\}$ the duplication of these. Then, for every node $v_i^{*}$ in the duplicated, or ``counterfactual,'' model, its latent parent $u_i^{*}$ is replaced with the original latent parent $u_i$ in the original, or ``factual,'' model, such that the original latent variables are now a parent of two nodes, $v_i$ and $v_i^{*}$. The two graphs are linked only by common latent parents, but share the same node structure and generating mechanisms. To compute a general counterfactual query $P(Y=y \mid E=e, \text{do}(X=x))$, one modifies the structure of the counterfactual network by dropping arrows from parents of $X^*$ and setting them to value $X^*=x$. Then, in the twin network with this modified structure, one computes the following probability $P(Y^*=y \mid E=e, X^*=x)$ via standard inference techniques, where $E$ are factual nodes. That is, in a twin network one has:
\begin{equation} \label{equation: twin network}
\begin{aligned}
P(Y=y \mid &E=e, \text{do}(X=x))=\\
&P(Y^*=y \mid E=e, X^*=x)
\end{aligned}
\end{equation}

To illustrate this concretely, 
consider the causal model with causal structure depicted in \ref{examplescm}, where variables $X, Y$ are binary.  The counterfactual statement to be computed is 
$P\left(Y=0 \text{ }| \text{ } Y=1, \text{ do}(X=0)\right)$. 
The twin network approach to this problem first constructs the linked factual and counterfactual networks depicted in \ref{examplescm_twin}. The intervention $\text{do}(X^*=0)$ is then performed in the counterfactual network; all arrows from the parents of $X^*$ are removed and $X^*$ is set to the value $0$---graphically depicted in \ref{examplescm_twin_do_x}. The above counterfactual query is reduced to the following conditional probability in \ref{examplescm_twin_do_x}:
$ P\left(Y^*=0 \text{ }| \text{ } Y=1, X^*=0\right),$
which can be computed using Bayesian inference techniques. 

\textbf{Our contribution:} Despite their importance for counterfactual inference, twin networks have not been widely studied---particularly from a machine learning perspective. In the Methods section we demonstrate how to combine twin networks with neural networks to estimate counterfactuals.  

\subsection{Non-identifiability \& domain knowledge} 
Is non-identifiability of counterfactuals a problem? Given a causal model trained on observations and interventions, can we always trust its counterfactual predictions? In general the answer is no: counterfactual predictions from a causal model can conflict with domain knowledge---even if it perfectly reproduces observations and interventions, as we now show. 

In epidemiology, causal models with structure similar to the one of \Cref{figure: toy example} are studied, where $X$ is the presence of a risk factor and $Y$ is the presence of a disease. From epidemiological domain knowledge, it is believed that risk factors always increase the likelihood that a disease is present \cite{tian2000probabilities}---referred to as ``no-prevention'', that no individual in the population can be helped by exposure to the risk factor \cite{pearl1999probabilities}. Hence, if one observes a disease, but not the risk factor, then, in that context, if we had intervened to give that individual the risk factor, the likelihood of them not having the disease must be zero---as having the risk factor can only increase the likelihood of a disease. 

 Assume the simple case of DAG \ref{figure: toy example}, with $X,Y$ \emph{binary}, $U_Y$ a four-valued variable distributed under $q(U_Y)$ and $\neg X$ to be the logical negation of $X$.  

\begin{equation} \label{equation: example non-monotonic function}
Y=
    \begin{cases}
      X, & \text{if}\ U_Y=0 \\
      0, & \text{if}\ U_Y=1 \\
      1, & \text{if}\ U_Y=2 \\
      \neg X, & \text{if}\ U_Y=3 \\
    \end{cases}
\end{equation}

We'll now describe two causal models that generate the same observations and interventions, yet the counterfactuals generated by one model satisfy the above domain knowledge and the other do not. Consider the two different parameterizations of the causal model above: $\{q(U_Y)\}_0^3=\{1/2,1/6,1/6,1/6\}$ and $\{q(U_Y)\}_0^3=\{1/3, 1/3, 1/3, 0\}$. Both models have the same conditional distributions, and we have $P(Y_{X=1}=1) >  P(Y_{X=0}=1)$ and $P(Y_{X=1}=0) <  P(Y_{X=0}=0)$. This tells us that intervening to set $X=1$ \emph{always} makes $Y=1$ more likely, and doesn't increase the likelihood of $Y=0$ relative to $X=0$. So, at the interventional-level, these models seem to comply with Epidemiological domain knowledge that says that the presence of a risk factor, that is, $X=1$, always makes disease, $Y=1$, more likely.

Despite this, in the first model $P(Y_{X=1}=1 \mid Y=1, X=0) < P(Y_{X=0}=1 \mid Y=1, X=0)$. According to this model, $Y=1$ becomes \emph{less} likely when we intervene with $X=1$ in the counterfactual context $Y_{X=0}=1$---even though intervening to set $X=1$ can only make $Y=1$ more likely, and does not increase the likelihood of $Y=0$. This is a very ``non-intuitive'' counterfactual prediction from the point of view of an Epidemiologist.
However, in the second model, $P(Y_{X=1}=1 \mid Y=1, X=0) = P(Y_{X=0}=1 \mid Y=1, X=0)$. Indeed, in this model, no matter the counterfactual context, intervening to set $X=1$ \emph{never} reduces the likelihood $Y=1$. Thus the second model complies fully with Epidemiological domain knowledge.  


As the models agree on the data they're trained on, we must impose extra constraints to  learn the model that generates domain-trustworthy counterfactuals. In the next section, we present a  simple principle that provide such constraints.

\subsection{Related Works}

Despite the large body of work using machine learning to estimate interventional queries, which we discuss in detail in the Appendix, relatively little work has explored using machine learning to estimate counterfactual queries.

Recent work from \cite{pawlowski2020deep} used normalising flows and variational inference to compute counterfacual queries using abduction-action-prediction. A limitation of this work is that identifiability constraints required for the counterfactual queries to be uniquely defined given the training data are not imposed. Work by \cite{oberst2019counterfactual}, expanded by \cite{lorberbom2021learning}, used the Gumbel-Max trick to estimate counterfactuals, again using abduction-action-prediction. While this methodology satisfies generalisations of the monotonicity constraint, it does so because the Gumbel-Max trick has a limit on the type of conditional distributions it can generate---not because the authors imposed partial-identfiability constraints during the learning process. Hence the Gumbel-Max may not be suitable for the computation of counterfactual queries requiring different (partial-)identifiability constraints. Additional work by \cite{cuellar2020non} devised a non-parametric method to compute the Probability of Necessity using an influence-function-based estimator. This estimator was derived under the assumption of monotonicity. A limitation of this approach is that a separate estimator must be derived and trained for each counterfactual query. 

In the field of XAI \cite{joshi2019towards,pawelczyk2022exploring} used an autoencoder and GAN based approaches for computing counterfactuals, unfortunately without exploring the causal identifiability requirements that our work does.

While our architecture shares similarities with some of the above, there is one main difference. By interpreting our architecture as a twin network in the sense of \cite{balke1994counterfactual} and explicitly including an input for the latent noise term $U_Y$, we elevate our network from estimating interventional queries to counterfactual ones by performing Bayesian inference on the trained model.

A large body of work address the issue of \emph{partial} identifiability of counterfactuals from data. Historically, this line of work was initiated by \cite{balke1997bounds}, who explored bounds on the probabilities of causal queries of binary variables using linear programming. Recently, \cite{Zhang2020BoundingCE} extended such linear programming derived upper and lower bounds beyond binary outcomes to the case of continuous outcomes. Additional work by \cite{zhang2021partial} bounded counterfactuals by mapping the SCM space onto a new one that is discrete and easier to infer upon. Finally, \cite{10.2307/2951620} proposed Local average treatment effects as means of identifications of interventional queries from observational data.

\section{Methods}\label{section: methods}


\subsection{Counterfactual ordering} \label{counterfactual ordering}
We continue to consider the causal structure above with a DAG as depicted in \Cref{figure: toy example}. However, now $X, Y$ are categorical variables with an arbitrary number of categories $N, M$ each. 
Inspired by the Epidemiological example from the previous section, we now define \emph{counterfactual ordering}, which posits an intuitive  relationship between counterfactual and intervantional distributions. 


\begin{define}[Counterfactual Ordering]
A causal model with categorical treatment variable $X$ and categorical outcome variable $Y$ satisfies \emph{counterfactual ordering} if there exists an ordering on interventions and outcomes $\{x_0, x_1, \dots, x_N\}$, $\{y_0, y_1, \dots, y_M\}$ such that $P(Y_{x_i} = y_k) \geq P(Y_{x_j} = y_k)$ and $P(Y_{x_i} = y_h) \leq P(Y_{x_j} = y_h)$ for all $i>j$ and $k>h$, then it must be the case that $P(Y_{x_i} \geq y_k | Y_{x^*} = y^*) \geq P(Y_{x_j} \geq y_k | Y_{x^*} = y^* )$ 
for all counterfactual contexts $\{y^*, x^*\}$.
\end{define}

This encodes to the following intuition: If intervention $x_i$ only increases the likelihood of outcome $y_k$, relative to any intervention $x_j$ with $j<i$, without increasing the likelihood of $y_h$ for all $h<k$, then intervention $x_i$ must increase the likelihood that the outcome we observe is at least as high as $y_k$, regardless of the context. 
Counterfactual ordering places the following constraints on a causal model.


\begin{theorem} \label{theorem: counterfactual ordering}
If counterfactual ordering holds $P(Y_{x_j} = y_l | Y_{x_i} = y_h) = 0$ for all $l >h$ and $i > j$.
\end{theorem}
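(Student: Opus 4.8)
The plan is to extract the desired vanishing probability directly from the conclusion of counterfactual ordering by choosing the counterfactual context and the outcome threshold cleverly. The key observation is that the defining inequality of counterfactual ordering, $P(Y_{x_i} \geq y_k \mid Y_{x^*} = y^*) \geq P(Y_{x_j} \geq y_k \mid Y_{x^*} = y^*)$, is available for every context $\{x^*, y^*\}$ and every threshold $y_k$ whenever $i > j$; in particular we are free to take the conditioning event to be one of the very intervention worlds appearing in the statement.

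First I would instantiate this inequality with the context $x^* = x_j$, $y^* = y_l$, and the threshold $k = l$. The right-hand side then becomes $P(Y_{x_j} \geq y_l \mid Y_{x_j} = y_l)$. Since conditioning on the event $Y_{x_j} = y_l$ fixes the value of $Y_{x_j}$ to be exactly $y_l$, this term equals $1$. The inequality therefore forces $P(Y_{x_i} \geq y_l \mid Y_{x_j} = y_l) \geq 1$, and being a probability it must equal $1$. In words: in the counterfactual context where intervening with $x_j$ yields $y_l$, the higher intervention $x_i$ produces an outcome at least as high as $y_l$ almost surely.

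Next I would pass to the complementary event. Because $Y_{x_i} \geq y_l$ has conditional probability $1$ given $Y_{x_j} = y_l$, every outcome strictly below $y_l$ has conditional probability zero, so $P(Y_{x_i} = y_h \mid Y_{x_j} = y_l) = 0$ for all $h < l$, that is, for all $l > h$ with $i > j$ fixed as above. Multiplying through by $P(Y_{x_j} = y_l)$ gives the symmetric joint statement $P(Y_{x_i} = y_h, Y_{x_j} = y_l) = 0$, which no longer depends on the direction of conditioning. Dividing this joint probability by $P(Y_{x_i} = y_h)$ finally yields the claimed identity $P(Y_{x_j} = y_l \mid Y_{x_i} = y_h) = 0$ for all $l > h$ and $i > j$.

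I expect the only delicate point to be the degenerate cases in which a conditioning event has probability zero, so that a conditional probability is not literally defined. I would sidestep this by treating the vanishing of the joint counterfactual probability $P(Y_{x_i} = y_h, Y_{x_j} = y_l)$ as the substantive conclusion, and reading off the two conditional forms only when the relevant marginal is positive. The one structural fact I rely on, namely $P(Y_{x_j} \geq y_l \mid Y_{x_j} = y_l) = 1$, follows immediately from the definition of the counterfactual $Y_{x_j}(u)$: the outcome within a single intervention world is deterministic once its value is observed.
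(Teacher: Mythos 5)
Your proof is correct, and it is the mirror image of the paper's argument rather than a reproduction of it. The paper instantiates the counterfactual-ordering inequality with the context $\{x^*, y^*\} = \{x_i, y_h\}$ --- the world of the \emph{higher} intervention --- so that the left-hand side $P(Y_{x_i} \geq y_k \mid Y_{x_i} = y_h)$ vanishes for $k > h$ by single-world determinism; squeezing between $0$ and $0$ then kills the right-hand side, which is already exactly the conditional in the theorem statement, via $\sum_{l>h} P(Y_{x_j} = y_l \mid Y_{x_i} = y_h) = 0$. You instead condition on the \emph{lower} intervention's world $\{x_j, y_l\}$, use determinism to pin the term $P(Y_{x_j} \geq y_l \mid Y_{x_j} = y_l)$ at $1$, conclude $P(Y_{x_i} \geq y_l \mid Y_{x_j} = y_l) = 1$, pass to the complement, and then flip the conditioning direction through the joint probability. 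Both routes rest on the same two ingredients (determinism within a single intervention world plus the ordering inequality evaluated at a cleverly chosen context), but yours needs an extra Bayes step to reach the conditional as stated, and in exchange it lands on the symmetric joint statement $P(Y_{x_i} = y_h,\, Y_{x_j} = y_l) = 0$ --- which is precisely the event-conjunction form $\{Y_{x_i}=y_h\} \wedge \{Y_{x_j}=y_l\} = \text{False}$ that the paper invokes immediately after the theorem and again in the proof of the monotonicity equivalence, so your formulation is arguably the more useful endpoint. Your explicit treatment of the degenerate case where a conditioning event has probability zero is also more careful than the paper's proof, which implicitly assumes $P(Y_{x_i} = y_h) > 0$ for its conditionals to be defined.
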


\begin{proof}
First note that $P(Y_{x'} = y' | Y_{x'} = y)=0$ for any $y' \neq y$ follows from the definition of counterfactuals. The conjunction of this and counterfactual ordering implies $0 = P(Y_{x_i} \geq y_k | Y_{x_i} = y_h) \geq P(Y_{x_j} \geq y_k | Y_{x_i} = y_h )$. As probabilities are bounded below by $0$, we have $P(Y_{x_j} \geq y_k | Y_{x_i} = y_h) = \sum_{l > h} P(Y_{x_j} = y_l | Y_{x_i} = y_h) = 0$.  Again, as probabilities are non-negative, we have $P(Y_{x_j} = y_l | Y_{x_i} = y_h) = 0$ for all $l >h$ and $i > j$.
\end{proof}



 Equality's of the form $P(Y_x = y' | Y_{x'} = y)=0$ are equivalent to the statement $\{Y_{x}=y'\} \wedge \{Y_{x'} = y\} = \text{False},$ where $\wedge$ is the logical AND operator. Therefore, the conjunction of the input-output pairs $X=x, Y=y'$ and $X=x, Y=y$ cannot occur in such a causal model. This yields constraints on the model parameters beyond those imposed by observations and interventions that can be enforced during causal model training. 

It is important to note that we are not saying every causal model should satisfy counterfactual ordering. As in all works on causal inference, it is ultimately up to the analyst to decide if such an assumption appears reasonable in a given domain. As discussed in the previous section, counterfactual ordering appears a reasonable assumption in Epidemiology. Moreover, are derived results relate to the cases where the interventions are on a single variable. While we do not constrain the number of variables with causal effect on the outcome, we note that multiple simultaneous treatments should be considered under the prism of disentangling treatment effects like in \cite{parbhoo2021ncore} In the Experiments section, we empirically demonstrate on data from medicine and finance that models that are trained to satisfy counterfactual ordering comply with domain knowledge, while models that aren't appear in conflict with domain knowledge.

\subsection{Counterfactual ordering and Counterfactual Stability}

\emph{Counterfactual stability} has been proposed by \cite{oberst2019counterfactual} as a different way to restrict the type of counterfactuals a causal model can output, to ensure they are ``intutive''. We define counterfactual stability then prove a relation between it and counterfactual ordering.

\begin{define}[Counterfactual Stability]
A causal model of categorical variable $Y$ satisfies
\emph{counterfactual stability} if it has the following property: If we observe $Y_x=y$, then for all $y' \neq y$, the condition $\frac{P(Y_x = y)}{P(Y_{x'}=y')} \geq \frac{P(Y_x=y')}{P(Y_{x'}=y)}$ implies that  $P(Y_x = y' | Y_{x'}=y) = 0$. That is,
if we observed $Y=y$ under intervention $X=x$, then the counterfactual outcome under
intervention $X=x'$ cannot be equal to $Y=y'$ unless the multiplicative change in $P(Y_x = y)$ is less than the multiplicative change in $P(Y_x = y')$.
\end{define}

This encodes the following intuition about counterfactuals: If we had taken an alternative action that would have only increased the probability of $Y=x$, without increasing the likelihood of other outcomes, then the same outcome would have occurred in the counterfactual case. Moreover, in order for the outcome to be different under the counterfactual distribution, the relative likelihood of an alternative outcome must have increased relative to that of the observed outcome.

Counterfactual stability is weaker than counterfactual ordering, as it imposes fewer constraints on the model. In fact,  counterfactual ordering between intervention and outcome values implies counterfactual stability holds between them:

\begin{theorem} \label{theorem: relation between counterfactual ordering and stability}
If a causal model satisfies \emph{counterfactual ordering} then it satisfies \emph{counterfactual stability}.
\end{theorem}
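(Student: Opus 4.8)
The plan is to obtain counterfactual stability directly from the monotone coupling that counterfactual ordering already forces through \Cref{theorem: counterfactual ordering}. First I would restate that theorem in the form I need: fixing the ordering $\{x_0,\dots,x_N\}$, $\{y_0,\dots,y_M\}$ that witnesses counterfactual ordering, \Cref{theorem: counterfactual ordering} says $P(Y_{x_j}=y_l\mid Y_{x_i}=y_h)=0$ whenever $i>j$ and $l>h$. Read as a statement about the joint counterfactual law, this means that for $i>j$ the pair $(Y_{x_i},Y_{x_j})$ is supported only on outcomes with $Y_{x_j}\le Y_{x_i}$; equivalently, conditioning on one counterfactual world's outcome, switching to a higher intervention can only push the outcome up and switching to a lower intervention can only push it down. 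Counterfactual stability is itself an assertion that a particular counterfactual probability vanishes, so the task reduces to showing that its hypothesis always lands the relevant outcome pair on the side where this one-sided support already gives zero.

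Next I would instantiate the stability statement: condition on the observed world $Y_{x'}=y$, fix an alternative outcome $y'\neq y$ and the queried intervention $x$, and assume the likelihood-ratio hypothesis of counterfactual stability. I would then split on the position of $x$ relative to $x'$ in the witnessing order (the case $x=x'$ being trivial). If $x$ is higher than $x'$, the coupling above forces $Y_x\ge Y_{x'}=y$, so $P(Y_x=y'\mid Y_{x'}=y)=0$ for every $y'<y$; if $x$ is lower, it forces $Y_x\le y$, giving zero for every $y'>y$. Thus in each branch the conclusion is immediate once I know that $y'$ sits on the correct side of $y$.

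The crux is to show that the likelihood-ratio hypothesis always delivers that ordering of $y'$ and $y$. Here I would invoke the interventional monotonicity packaged into counterfactual ordering: moving to a higher intervention raises the mass on higher outcomes and lowers it on lower ones, so for $x$ higher than $x'$ an outcome $y'>y$ would make the odds of $y$ against $y'$ move in the direction opposite to the one the stability hypothesis requires, ruling out $y'>y$ and leaving $y'<y$; the case of $x$ lower than $x'$ is symmetric. I expect this translation step to be the main obstacle: the stability hypothesis is phrased through ratios of interventional probabilities while counterfactual ordering is phrased through inequalities between them, so the argument must convert one into the other cleanly and also dispose of the two degenerate situations — equality in the ratio, and outcomes of zero interventional probability where the ratios are undefined — which I would handle by returning to the support statement of \Cref{theorem: counterfactual ordering} directly rather than through the ratios.
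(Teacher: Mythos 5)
Your overall plan---reading \Cref{theorem: counterfactual ordering} as a one-sided support (monotone coupling) statement, splitting on whether $x$ sits above or below $x'$ in the witnessing order, and then arguing the stability hypothesis forces $y'$ onto the side where that support statement already gives zero---is more systematic than the paper's own proof, which silently fixes the single configuration ``$x$ above $x'$, $y$ above $y'$'' (that is what licenses its quoted inequalities $P(Y_x=y)\geq P(Y_{x'}=y)$ and $P(Y_x=y')\leq P(Y_{x'}=y')$), checks that the ratio hypothesis and the zero-probability conclusion both hold there, and stops. But your crux step is a genuine gap: the likelihood-ratio hypothesis does \emph{not} rule out $y'$ on the ``wrong'' side. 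Take $x$ above $x'$ and $y'$ above $y$. The paper's hypothesis, cross-multiplied, reads $P(Y_x=y)P(Y_{x'}=y)\geq P(Y_x=y')P(Y_{x'}=y')$; it compares the total mass on $y$ against the total mass on $y'$, and says nothing about the \emph{direction} in which mass moves under the intervention, so interventional monotonicity cannot contradict it. Concretely, with outcomes $\{0,1,2,3\}$ and marginals $P(Y_{x'}=\cdot)=(0.1,0.4,0.1,0.4)$, $P(Y_x=\cdot)=(0.05,0.4,0.1,0.45)$, every interventional inequality in the definition of counterfactual ordering holds (interior outcomes tie, which the definition read literally in fact forces), and the coupling
\begin{equation*}
\pi(0,0)=0.05,\;\; \pi(0,1)=0.05,\;\; \pi(1,1)=0.35,\;\; \pi(1,2)=0.05,\;\; \pi(2,2)=0.05,\;\; \pi(2,3)=0.05,\;\; \pi(3,3)=0.4
\end{equation*}
(rows $Y_{x'}$, columns $Y_x$, supported on $Y_x\geq Y_{x'}$) is monotone, hence satisfies counterfactual ordering. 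Yet for $y=1$, $y'=2$ the hypothesis holds ($0.4\cdot 0.4 \geq 0.1\cdot 0.1$) while $P(Y_x=2\mid Y_{x'}=1)=0.05/0.4=0.125\neq 0$.

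Your proposed escape hatch for the degenerate cases---``returning to the support statement of \Cref{theorem: counterfactual ordering} directly''---cannot close this hole, because that theorem only kills the joint event $\{Y_{x_j}=y_l\}\wedge\{Y_{x_i}=y_h\}$ (lower intervention, higher outcome, jointly with higher intervention, lower outcome); the event your remaining case must exclude is $\{Y_{x_i}=y_l\}\wedge\{Y_{x_j}=y_h\}$, which monotonicity positively permits. Nor does switching to the original Oberst--Sontag form of the ratio condition help: there monotonicity does make the hypothesis \emph{generically} false for $y'>y$, exactly as your ``opposite direction of the odds'' intuition suggests, but only through non-strict inequalities, so ties at interior outcomes (as in the example above, where both ratios equal $1$) make the hypothesis hold while the conclusion fails. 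So the step you flagged as the main obstacle is not a technicality to be cleaned up; for three or more outcome categories the implication genuinely fails in that case, and the paper's proof avoids confronting it only by never leaving the configuration $y>y'$. Any correct statement has to either restrict the stability requirement to counterfactual outcomes lying below the observed one in the witnessing order, or impose strictness in the interventional inequalities.
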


\begin{proof}
We need to show that when counterfactual ordering holds, $\frac{P(Y_x = y)}{P(Y_{x'}=y')} \geq \frac{P(Y_x=y')}{P(Y_{x'}=y)}$ $\implies$  $P(Y_x = y' | Y_{x'}=y) = 0.$ From counterfactual ordering we have $P(Y_x = y) \geq P(Y_{x'}=y)$ and $P(Y_x = y') \leq P(Y_{x'}=y').$ The latter implies $\frac{P(Y_x=y')}{P(Y_{x'}=y')} \leq 1$, which when combined with the former yields: $P(Y_x = y) \geq \frac{P(Y_x=y')}{P(Y_{x'}=y')} P(Y_{x'}=y).$ Additionally, \Cref{theorem: counterfactual ordering} says counterfactual ordering implies $P(Y_x = y' | Y_{x'}=y) = 0$, concluding the proof.
\end{proof}

\subsection{Counterfactual ordering functionally constrains causal mechanisms to be monotonic}


\cite{oberst2019counterfactual} were unable to derive any general functional constraints counterfactual stability places on the causal mechanisms underlying a given causal model. They were only able to compute counterfactuals satisfying it in a single, specific type of causal model. Namely, one where the mechanisms are parameterised using the Gumbel-Max trick. By contrast, we now derive general a functional constraint on causal mechanisms that is equivalent to counterfactual ordering. In the next section we will show how to learn causal models that satisfy this constraint. Thus we are able to learn causal models that satisfy counterfactual ordering without the need for specific parametric assumptions---such as the Gumbel-Max trick, as was required for counterfactual stability by \cite{oberst2019counterfactual}.


\begin{define}[Monotonicity]
If there exists an ordering on interventions and outcomes: $\{x_0, x_1, \dots, x_N\}$, $\{y_0, y_1, \dots, y_M\}$ such that $P(Y_{x_i} = y_k) \geq P(Y_{x_j} = y_k)$ and $P(Y_{x_i} = y_h) \leq P(Y_{x_j} = y_h)$ for all $i>j$ and $k>h$ then $Y_x(u) \geq Y_{x'}(u)$ for all $u$. Equivalently, the events $\{Y_{x_i}=y_h\} \wedge \{Y_{x_j}=y_l\}=\text{False}$ for all $i>j$ and $h<l$.
\end{define}

\begin{theorem} \label{theorem: relation between counterfactual ordering and monotonicity}
Given an intervention \& outcome ordering, counterfactual ordering \& monotonicity are equivalent.
\end{theorem}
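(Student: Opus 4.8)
The plan is to prove the two implications separately. Observe first that both \emph{counterfactual ordering} and \emph{monotonicity} carry the exact same antecedent---the interventional ordering $P(Y_{x_i}=y_k) \geq P(Y_{x_j}=y_k)$ and $P(Y_{x_i}=y_h) \leq P(Y_{x_j}=y_h)$ for $i>j$, $k>h$---so, with a candidate ordering on interventions and outcomes fixed, it suffices to show that the \emph{consequent} of one definition holds if and only if the consequent of the other does. I would phrase the monotonicity consequent in its equivalent ``events False'' form, $\{Y_{x_i}=y_h\} \wedge \{Y_{x_j}=y_l\} = \text{False}$ for all $i>j$ and $h<l$, since this is what connects most directly to the probabilistic statements.

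For the direction \emph{counterfactual ordering} $\Rightarrow$ \emph{monotonicity}, I would invoke \Cref{theorem: counterfactual ordering}, which already establishes that counterfactual ordering forces $P(Y_{x_j}=y_l \mid Y_{x_i}=y_h)=0$ for all $l>h$ and $i>j$. A vanishing conditional probability here is equivalent to the vanishing joint probability $P(Y_{x_j}=y_l, Y_{x_i}=y_h)=0$, which is precisely the assertion that the conjunction $\{Y_{x_i}=y_h\} \wedge \{Y_{x_j}=y_l\}$ is False for $i>j$, $h<l$. This is exactly the ``events False'' characterisation of monotonicity, so nothing further is needed in this direction.

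For the converse, \emph{monotonicity} $\Rightarrow$ \emph{counterfactual ordering}, I would work from the pointwise form of monotonicity, $Y_{x_i}(u) \geq Y_{x_j}(u)$ for every $u$ when $i>j$. The key step is a deterministic set inclusion: since $Y_{x_i}(u) \geq Y_{x_j}(u)$, the implication $Y_{x_j}(u) \geq y_k \Rightarrow Y_{x_i}(u) \geq y_k$ holds for every threshold $y_k$, giving $\{u : Y_{x_j}(u) \geq y_k\} \subseteq \{u : Y_{x_i}(u) \geq y_k\}$ and hence the pointwise indicator bound $\mathbb{1}[Y_{x_j}(u)\geq y_k] \leq \mathbb{1}[Y_{x_i}(u) \geq y_k]$. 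Fixing any counterfactual context $\{x^*, y^*\}$, the conditional probability is obtained by averaging these indicators against the posterior $P(u \mid Y_{x^*}=y^*)$; since the inequality holds pointwise in $u$, it is preserved under this nonnegative averaging, yielding $P(Y_{x_i} \geq y_k \mid Y_{x^*}=y^*) \geq P(Y_{x_j} \geq y_k \mid Y_{x^*}=y^*)$, the counterfactual ordering consequent. As the context was arbitrary, this holds for all contexts.

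The main obstacle, such as it is, lies not in either inequality but in cleanly reconciling the two guises of monotonicity: the pointwise statement $Y_{x_i}(u)\geq Y_{x_j}(u)$ used in the converse and the ``events False'' statement matched to \Cref{theorem: counterfactual ordering} in the forward direction. I would make this bridge explicit at the outset---noting that a violation $Y_{x_i}(u) < Y_{x_j}(u)$ for some $u$ is exactly a realisation of the forbidden conjunction $\{Y_{x_i}=y_h\}\wedge\{Y_{x_j}=y_l\}$ with $h<l$---and would flag the mild caveat that equating zero conditional and zero joint probabilities requires the conditioning events to carry positive probability, which is the only place any measure-theoretic bookkeeping enters.
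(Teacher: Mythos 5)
Your proposal is correct and follows essentially the same route as the paper's proof: the forward direction invokes \Cref{theorem: counterfactual ordering} to obtain the forbidden-conjunction (``events False'') form of monotonicity, and the converse uses pointwise monotonicity $Y_{x_i}(u) \geq Y_{x_j}(u)$ to get the inclusion $\{u : Y_{x_j}(u) \geq y_k\} \subseteq \{u : Y_{x_i}(u) \geq y_k\}$ and then averages against the posterior $P(u \mid x^*, y^*)$. Your explicit bridging of the two guises of monotonicity and the positive-probability caveat for conditioning are minor refinements the paper leaves implicit, but the argument is the same.
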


\begin{proof}
First we show counterfactual ordering implies monotonicity. From Theorem~\ref{theorem: counterfactual ordering}, counterfactual ordering implies $\{Y_{x_i}=y_h\} \wedge \{Y_{x_j}=y_l\}=\text{False}$ for all $i>j$ and $h<l$, as $P(Y_{x_i}=y_h | Y_{x_j}=y_l)=0$. Monotonicity follows. 

Next we show monotonicity implies counterfactual ordering. Monotonicity implies that for intervention $X=x$, the likelihood of the outcome being higher in the outcome ordering increases, while the likelihood of the outcome being lower in the ordering decreases relative to the likelihoods imposed by intervention $X=x'$ which lies lower in the intervention ordering. 
All that remains is to show that for such interventions, $x, x'$, and outcome $Y=y$ for which $P(Y_x=y) \geq P(Y_{x'}=y)$, it follows that $P(Y_x \geq y | Y_{x^*} = y^*) \geq P(Y_{x'} \geq y | Y_{x^*} = y^* )$ 
for all counterfactual contexts $\{y^*, x^*\}$.  

$P(Y_x \geq y | Y_{x^*} = y^*)$ is computed by first updating $P(U)$ under $x^*,y^*$ and computing $P(Y \geq y)$ in the submodel $M_x$. That is, it corresponds to the expected value of $P(Y_x(u) \geq y)$ under $u\sim P(U|x^*,y^*)$. 
From monotonicity one has $Y_x(u) \geq Y_{x'}(u)$ for all $u$. Hence, for any $U=u$ that results in $Y_{x'}(u) \geq y$, that same $U=u$ yields $Y_{x}(u) \geq y$, as $Y_{x}(u) \geq Y_{x'}(u) \geq y$. Hence, as there are at most as many values of $U$ that lead to $Y_x \geq y$ as lead to $Y_{x'} \geq y$, one has $P(Y_x(u) \geq y) \geq  P(Y_{x'}(u) \geq y)$ for any $U=u$. This follows because $P(Y_x(u) \geq y) = \sum_{u|Y_x(u) \geq y} P(U=u)$ together with the observation that summands in $\sum_{u|Y_{x'}(u) \geq y} P(U=u)$ are a subset of summands in $\sum_{u|Y_x(u) \geq y} P(U=u)$. Taking expectations under $P(U|y^*, x^*)$ yields the proof.
\end{proof}

\cite{tian2000probabilities} proved that in an SCM with DAG \ref{examplescm} with binary $X, Y$, where $Y$ is monotonic in $X$, the probabilities of causation---important counterfactual queries that quantify the degree to which one event was a necessary or sufficient cause of another---can be uniquely identified from observational and interventional distributions. See \Cref{prob_of_causation_def} for a definition of the probabilities of causation. 
We thus have the follow corollary to theorem~\ref{theorem: relation between counterfactual ordering and monotonicity}.

\begin{corollary}\label{corollary}
In a counterfactually ordered SCM with DAG \ref{examplescm} and binary $X, Y$, the probabilities of causation are identified from observational and interventional distributions.
\end{corollary}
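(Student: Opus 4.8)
The plan is to prove Corollary~\ref{corollary} by reducing it to two ingredients that are already available: Theorem~\ref{theorem: relation between counterfactual ordering and monotonicity}, which equates counterfactual ordering with monotonicity, and the identification result of \cite{tian2000probabilities}, which is stated precisely for monotonic binary SCMs. Since the corollary merely specialises the monotonicity hypothesis to the binary case and then hands the problem to Tian and Pearl, the argument is a short chain of implications rather than a fresh computation.

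First I would invoke Theorem~\ref{theorem: relation between counterfactual ordering and monotonicity}: given the fixed intervention-and-outcome ordering on the binary alphabets $\{x_0, x_1\}$ and $\{y_0, y_1\}$, counterfactual ordering is equivalent to monotonicity. Hence a counterfactually ordered SCM with DAG~\ref{examplescm} and binary $X,Y$ has, by the definition of Monotonicity, the property that $Y_x(u) \geq Y_{x'}(u)$ for all $u$, equivalently $\{Y_{x_1}=y_0\} \wedge \{Y_{x_0}=y_1\} = \text{False}$.

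The one point that genuinely needs checking---and which I expect to be the only real obstacle---is that the categorical notion of monotonicity introduced here collapses, in the binary case, to exactly the monotonicity assumption used by \cite{tian2000probabilities}. I would verify this by specialising the event condition $\{Y_{x_i}=y_h\} \wedge \{Y_{x_j}=y_l\}=\text{False}$ (for $i>j$, $h<l$) to the only nontrivial instance available when $i,j,h,l \in \{0,1\}$, namely $i=1$, $j=0$, $h=0$, $l=1$. This yields precisely that no unit has $Y=0$ under the larger treatment while having $Y=1$ under the smaller treatment, which is Tian and Pearl's monotonicity, i.e.\ the ``no-prevention'' condition that the risk factor never decreases the likelihood of disease for any individual. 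Once this identification of hypotheses is established, the conclusion is immediate.

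Finally I would apply the theorem of \cite{tian2000probabilities}, which states that under this binary monotonicity assumption the probabilities of causation---see \Cref{prob_of_causation_def}---are uniquely determined by the observational distribution $P(Y \mid X)$ and the interventional distribution $P(Y \mid \mathrm{do}(X))$. Chaining this with the reduction above yields the corollary. No additional estimation or optimisation is required, since identification is inherited directly from the cited result once monotonicity has been secured via counterfactual ordering.
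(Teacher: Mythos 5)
Your proposal is correct and takes essentially the same route as the paper: the corollary is stated there as an immediate consequence of Theorem~\ref{theorem: relation between counterfactual ordering and monotonicity} chained with the identification result of \cite{tian2000probabilities}, with no separate proof given in the appendix. Your extra step---checking that the paper's categorical monotonicity definition, specialised to $i=1, j=0, h=0, l=1$, reduces exactly to Tian and Pearl's binary no-prevention condition---is the only substantive verification needed, and you carry it out correctly.
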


For categorical variables beyond the binary case, it is unknown whether monontonicity implies unique identifiability. However, in this work we are not concerned with counterfactuals being uniquely defined, as long as ``non-intuitive'' counterfactuals are ruled out. Constraints on the model beyond those imposed by observations and experimental data are said to \emph{partially-identify} counterfactual distributions. In the next section we demonstrate how to learn causal models satisfying counterfactual ordering from data.

\subsection{Deep twin networks}\label{Section: Combiningg twin networks and monotonicity with neural networks to estimate the probabilities of causation}


We now present \emph{deep twin networks} which combine twin networks with neural networks to learn the causal mechanisms and estimate counterfactuals. Importantly, we will discuss how to ensure the function space learned by the neural network satisfies counterfactual ordering.

Contrary to prior Bayesian network approaches deep twin networks allow us not only to estimate counterfactual probabilities from data but to learn the underlying functions that dictate the interactions between causal variables. Hence we are able to gain a deeper insight on the mechanisms represented in the structural causal model that generate the counterfactuals we want to predict. Moreover we are able to quantify the uncertainty about the outcome by learning the latent noise distribution. Finally, the use of neural networks allow us to gain flexibility and computational advantages not present in previous plug-in estimators. Specifically we will see that our methods admits an arbitrary number and type of confounders $Z$ while estimating counterfactual probabilities, a stark difference to plug-in estimators, such as those presented in \cite{cuellar2020non}. 

Our approach has two stages, training the neural network such that it learns the counterfactually ordered causal mechanisms that best fit the data, then interpreting it as a twin network on which standard inference is performed to estimate counterfactual distributions. Note that if one can generate counterfactuals, one can also generate interventions. 

For clarity, we confine our explanations to the causal structure from \Cref{examplescm} where $X,Y$ are categorical variables with $X \in \{1, \dots, N\}$ and $Y\in \{1,\dots, M\}$, and $Z$ can be categorical or numerical. Note there can be many $Z$. Our method can be extended to multiple causes and a single output straightforwardly. To generalize this approach to an arbitrary causal structure, one applies our method to each parent-child structure recursively in the topological specified by the direction of the arrows in the causal structure. 

\noindent\textbf{Training deep twin networks:}
To determine the architecture of our neural network, we start with the causal structure of the SCM we wish to learn, and consider the graphical structure of its twin network representation. Our neural network architecture then exactly follows this graphical structure. This is graphically illustrated for the case of binary $X,Y$ from \Cref{examplescm} with twin network in \Cref{examplescm_twin_do_x} in \Cref{figure: deep twin netork architectures}. In the case of binary $X,Y$, the neural network has two heads, one for the outcome under the factual treatment and the other for the outcome under the counterfactual treatment. Furthermore two shared---but independent of one another---base representations, one corresponding to a representation of the observed confounders, $Z$, and the other to the latent noise term on the outcome, $U_Y$, are employed. For multiple treatments we have $N$ neural network heads, each corresponding to the categories of $X$. To interpret this as a twin network for given evidence $X,Y,Z$ and desired intervention $X^*$, we marginalize out the heads indexed by the elements of $\{1,\dots,N\} / X,X^*$. To train this neural network, we require two things: 1) a label for head $Y^*$, and 2) a way to learn the distribution of the latent noise term $U_Y$.

For 1), we must ask what the expected value of $Y^*$ is, for fixed covariates $Z$, under a change in input $X^*$. This corresponds to $\mathbb{E}(Y^* | X^*, Z)$. Given the correspondence between twin networks and the original SCM outlined in \Cref{equation: twin network} from \Cref{section: twin networks}, this corresponds to $\mathbb{E}(Y | do(X), Z)$, which is the expected value of $Y$ under an intervention on $X$ for fixed $Z$. There are many approaches to estimating this quantity in the literature \cite{shalit2016estimating, alaa2017deep,johansson2016learning, shi2019adapting}. We follow~\cite{schwab2018perfect}
due to their methods simplicity and empirical high performance. Any method that computes $\mathbb{E}(Y | do(X), Z)$ can be used, however. In addition to specifying the causal structure, the following standard assumptions are needed to estimate $\mathbb{E}(Y | do(X), Z)$ \cite{schwab2018perfect}: 1) \emph{Ignorability:} there are no unmeasured confounders; 2) \emph{Overlap:} every unit has non-zero probability of receiving all treatments given their observed covariates. Computing this expectation provides the labels for $Y^*$.

\begin{figure}[t]
    \centering
   
    \label{figure: deep twin netork architectures_TN}{
        \includegraphics[width=0.95\linewidth]{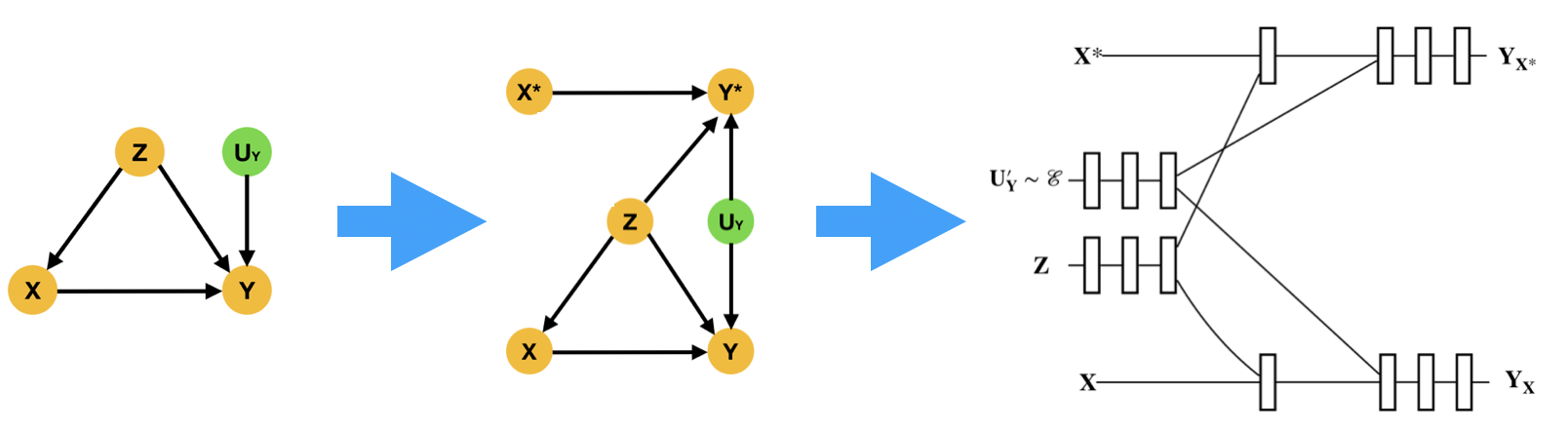}
        }
        
        
    \caption{From DAG to twin network DAG to deep neural network (NN) architecture for binary $X,Y$. Rectangular blocks are NN blocks, like FCN layers or Lattices; forward intersections are concatenation of features.}
    \label{figure: deep twin netork architectures}
\end{figure}





For 2), consider the following. Formally, the causal structure of \Cref{examplescm} has $Y=f(X,Z,U_Y)$ with $U_Y\sim q(U_Y)$ for some $q$. Without loss of generality \cite{goudet2018learning}, one can rewrite this as $Y=f(X,Z,g(U_Y'))$ with $U_Y'\sim \mathcal{E}$ and $U_Y = g(U_Y')$, where $\mathcal{E}$ is some easy-to-sample-from distribution, such as a Gaussian or Uniform. Hence we have reduced learning $q(U_Y)$ to learning function $g$, whose input corresponds to samples from a Guassian or Uniform.
Taken together, this provides a method to train our deep twin network. A summary is provided in \Cref{algorithm: training}. Key implementation details to note are that we are (a) using an MSE objective function casting the problem onto regression, as this was in practice easier to train; (b) the neural network employed two linear layers for each module representing the structure of the twin network, with 32 features each. 


\begin{algorithm}[h]
\caption{Training a deep twin network} \label{algorithm: training}
\scriptsize
\textbf{Input:} $X$: Treatment, $Z$: Confounders, $X^{*}$: Counterfactual Treatment; $Y$: Outcome; C: DAG of causal structure; I: loss imposing constraint on causal mechanisms
\\
\textbf{Output:} F: trained deep twin network
\begin{algorithmic}[1]
\STATE Set F's architecture to match twin network representation of C, as in Figure~\ref{figure: deep twin netork architectures}
\STATE To obtain label for counterfactual head, estimate $\mathbb{E}(Y | do(X), Z)$, yielding training dataset $\mathcal{D}:=\{X,X^*,Z;Y,Y^*\}$
\FOR {$x, x^{*}, z; y, y^* \in \mathcal{D}$ and $u_y \sim \mathcal{N}(0,1)$}
\STATE $y',y^{'*} = F(x,x^{*},u_y,z)$
\STATE Train F by minimizing $MSE(y,y') + MSE(y^{*},y^{'*}) + I(\mathcal{D})$ 
\ENDFOR
\end{algorithmic}
\end{algorithm}



\noindent\textbf{Enforcing constraints on the causal mechanisms:\label{}}
There are a few approaches to ensure that the function space learned by a neural network satisfies counterfactual ordering. Recall from Theorem~\ref{theorem: counterfactual ordering} that such constraints correspond to limits on the type of input-output pairs consistent with the function. 
One approach is to specify a loss function penalising the network for outputs that violate the constraints, as done in \cite{sill1997monotonicity}. Alternatively, 
counterexample-guided learning \cite{sivaraman2020counterexample} can be employed, to ensure the trained network does not produce any of these outputs when given the corresponding input. 
Lastly, as Theorem~\ref{theorem: relation between counterfactual ordering and monotonicity} equates counterfactual ordering with monotonicity, a recent method uses ``look-up tables'' \cite{gupta2016monotonic} to enforce monotonicity, and has been implemented in TensorFlow Lattice. 
Theorem \ref{theorem: relation between counterfactual ordering and monotonicity} requires that the treatment and outcome categorical variables are themselves ordered. One method of ordering our treatment variables is based on their perceived preference, i.e. based on domain knowledge. In the absence of domain knowledge one could look at the Average Treatment Effect (ATE)  $=:\frac{1}{N} \sum_i y_1(i)-y_0(i)$, where $y_1$ represents the treated outcome while $y_0$ the control outcome, as well as the interventional probabilities $P(Y\mid do(X))$. We then determine an estimate of the relationship governing the treatment and the outcomes by observing the trend of the ATE or $P(Y\mid do(X))$ as we change the treatments. The treatments are then ordered such that their relationship towards the outcomes can be characterized as monotonic. If one does not have access to interventional probabilities, then one can estimate them from observational data \cite{schwab2018perfect}. 


\paragraph{Estimating counterfactuals:} We can now use the trained model to perform counterfactual inference. The reason our neural network architecture matches the Twin Network structure is that performing Bayesian inference on the neural network explicitly equates to performing counterfactual inference. For \Cref{examplescm}, there are two counterfactual queries one can ask: (1) $P(Y_{X=x'}=y' \mid X=x, Y=y, Z=z)$, (2) $P(Y_{X=x}=y, Y_{X=x'}=y' \mid Z=z)$, where any of $x, y, z$ can be the empty set. Recall from Section~\ref{section: twin networks} that in a twin network (1) corresponds to $P(Y^*=y' \mid X=x, Y=y, X^*=x')$, and (2) corresponds  to $P(Y=y, Y^*=y' \mid X=x, X^*=x')$. Any method of Bayesian inference can then be employed to compute these probabilities, such as Importance or Rejection Sampling, or Variational methods. See Algorithm~\ref{algorithm: inference}. 

\begin{algorithm}[h]
\caption{Deep twin network counterfactual inference}\label{algorithm: inference}
\scriptsize
\textbf{Input:} $X$: Treatment, $U_Y$: Noise, $Z$: Confounders, $X^{*}$: Counterfactual Treatment; $Y$: Outcome $Y^{*}$: Counterfactual Outcome; F : Trained deep twin network; Q: desired counterfactual query (in this example, $Y_{X=x'}=y' \mid X=x, Y=y, Z=z$))
\\
\textbf{Output:} P(Q): Estimated distribution of Q.
\begin{algorithmic}[1]
\STATE Convert P(Q) to twin network distribution: $P(Y_{X=x'}=y' \mid X=x, Y=y, Z=z) \rightarrow P(Y^*=y' \mid X=x, Y=y, X^*=x')$ 
\STATE Compute $P(Y^*=y' \mid X=x, Y=y, X^*=x'):$ 
\FOR  {$x, x', z \in \mathcal{D}_{test}$}
 \FOR{$   [u_y]_N \sim \mathcal{N}(0,1), N\in \mathbb{N}$}
    \STATE Sample ($ \tilde{y}, \tilde{y}^* = F(x,x',u_y,z)$) such that $\tilde{y}=y$
    \STATE The frequency of these samples for which $\tilde{y}^*=y'$ yields P(Q)
 \ENDFOR
 \ENDFOR
\end{algorithmic}
\end{algorithm}

\section{Experimentation}

We now evaluate our \emph{counterfactual ordering} principle and our \emph{deep twin network} computational tool. 
We focus on four publicly available real-world datasets, the German Credit Dataset \cite{Dua:2019}, the International Stroke Trial (IST) \cite{IST}, the Kenyan Water task \cite{cuellar2020non} and the Twin mortality dataset~\cite{louizos2017causal}. We further use two synthetic and two semi-synthetic tasks to test our proposed methods. Full dataset description is in \Cref{appe_data_description}. 

We break our research questions (RQ) into two distinct types: ones that assess our counterfactual ordering principle, and ones that assess the counterfactual estimation accuracy of deep twin networks. Specifically, to asses counterfactual ordering, we wish to determine if not enforcing it leads to domain knowledge conflicting counterfactuals in real datasets, relative to enforcing it. To estimate counterfactual estimation accuracy of deep twin networks satisfying counterfactual ordering, we test on synthetic and semi-sythentic datasets---where we by design have access to the ground truth---as well as on a dataset involving twins, where we use features of one twin as a the counterfactual for the other. Finally, we also test on a real dataset involving binary treatment and outcome. We do this as Corollary~\ref{corollary} showed that in counterfactually ordered causal models with binary variables, certain counterfactual probabilities are uniquely identified from data. Hence for binary variables we can determine how accurate our deep twin network method is relative to these known identified expressions. 

\begin{itemize}
\item \textbf{RQ1:} If counterfactual ordering isn't enforced, do counterfactuals conflict with domain knowledge? 
\item \textbf{RQ2:} By imposing counterfactual ordering, do generated counterfactuals comply with domain knowledge?
\item\textbf{RQ3:} Can we accurately estimate counterfactual probabilities using deep twin networks?
 \end{itemize}
 
 \begin{table}[t]
\centering
\resizebox{1\columnwidth}{!}{
\begin{tabular}{rllll}
\toprule
  P(T',T) & 0.0                  & 1.0                  & 2.0                  & 3.0                  \\
\midrule
    0 & 0                    & $0.0816 \pm 0.1414 $ & $0.0860 \pm 0.1191 $ & $0.0344 \pm 0.0208 $ \\
    1 & $0.1439 \pm 0.1396 $ & 0                    & $0.1156 \pm 0.0984 $ & $0.1297 \pm 0.1306 $ \\
    2 & $0.1286 \pm 0.0726 $ & $0.1290 \pm 0.1347 $ & 0                    & $0.0741 \pm 0.0752 $ \\
    3 & $0.0680 \pm 0.0457 $ & $0.1854 \pm 0.1452 $ & $0.0974 \pm 0.1140 $ & 0                    \\
\bottomrule
\end{tabular}}
\caption{\textbf{Non-constrained model}. $P(T',T) = P(\text{Risk}_{\text{Account Status}=T'}=\text{good} \mid \text{Account Status}=T, \text{Risk}=\text{bad})$. Columns and rows are Treatments. 
We observe counter-intuitive probabilities as the lower triangular sub-matrix offers higher probabilities than the upper triangular one. That is, if we observe evidence where bad account status led to bad risk, the non-constrained model predicts an increase in net worth would have led to a \emph{lower} chance of being deemed a good risk---even though all other factors are kept fixed. An un-intuitive result that conflicts with domain knowledge of the finance industry.}\label{pn_account_risk_no_constr}
\end{table}

\begin{table}[t]
\centering
\resizebox{1\columnwidth}{!}{
\begin{tabular}{rllll}
\toprule
    P(T',T) & 0.0                  & 1.0                  & 2.0                  & 3.0                  \\
\midrule
    0 & 0                    & $0.3022 \pm 0.0415 $ & $0.3977 \pm 0.0382 $ & $0.4040 \pm 0.0381 $ \\
    1 & $0.1079 \pm 0.0322 $ & 0                    & $0.3891 \pm 0.0988 $ & $0.4118 \pm 0.0545 $ \\
    2 & $0.0670 \pm 0.0156 $ & $0.2816 \pm 0.0653 $ & 0                    & $0.4470 \pm 0.0751 $ \\
    3 & $0.1383 \pm 0.0442 $ & $0.2953 \pm 0.0344 $ & $0.3522 \pm 0.0577 $ & 0                    \\
\bottomrule
\end{tabular}}
\caption{\textbf{Counterfactual Ordering}. $P(T',T) = P(\text{Risk}_{\text{Account Status}=T'}=\text{good} \mid \text{Account Status}=T, \text{Risk}=\text{bad})$. Columns and rows are Treatments. We observe intuitive results as the  lower triangular sub-matrix offers lower probabilities than the upper triangular one. That is, when we observe evidence in which bad account status led to bad risk, the counterfactually ordered model predicts an increase in net worth would have led to a higher chance of being deemed a good risk---an intuitive result that complies with domain knowledge in the finance industry.}\label{pn_account_risk_count_ordering}
\end{table}

\subsection{Answering \textbf{RQ1} \& \textbf{RQ2}}

We investigate the German Credit real-world dataset and explore the International Stroke Trial dataset in the Appendix. We train a deep twin network on German Credit data using algorithm~1. In the Appendix we outline how we determined the monotonicity direction. 
In Table~\ref{pn_account_risk_count_ordering} and Table~\ref{pn_account_risk_no_constr} we estimate the counterfactual probability $P(\text{Risk}_{\text{Account Status}=T'}=\text{good} \mid \text{Account Status}=T, \text{Risk}=\text{bad})$ for a model satisfying counterfactual ordering and an unconstrained model respectively. 
That is, we ask what the probability that our loan risk would be good if we improved our account status, given that our account status is currently bad and we were just deemed a poor risk of a loan.  We note that the unconstrained model offers us non-intuitive probabilities that, when put in context, do not make sense in the real world. We observe that when we condition on evidence in which bad account status led to bad risk, the unconstrained model predicts that increasing an individuals net worth would have resulted in a lower probability of being deemed a good risk than \emph{decreasing} their net worth. This result defies common sense, answering RQ1. On the other hand, when we observe bad account status led to bad risk, the counterfactually ordered model predicts that an increase in an individuals net worth would have led to a higher chance of them being deemed a good risk than a decrease in their worth in this context. This result fits with our understanding of the financial industry, answering RQ2. 

We observe the same ``intuitive'' versus ``non-intuitive'' behavior for the Heparin treatment from the International Stroke Trial dataset in Appendix~\ref{appendix:unconstrained_ps}.

\begin{table*}[t]
\centering
\resizebox{\textwidth}{!}{
\begin{tabular}{@{}lll|ll|ll@{}}
                        & \multicolumn{2}{c|}{Credit Dataset} & \multicolumn{2}{c|}{IST - Aspirin} & \multicolumn{2}{c}{IST - Heparin} \\ \midrule
\multicolumn{1}{c}{\textbf{F1 Scores}} &
  \multicolumn{1}{c}{\begin{tabular}[c]{@{}c@{}}No Constrains\\ Linear Layers\end{tabular}} &
  \multicolumn{1}{c|}{\begin{tabular}[c]{@{}c@{}}Counterfactual \\ Ordering\end{tabular}} &
  \multicolumn{1}{c}{\begin{tabular}[c]{@{}c@{}}No Constrains\\ Linear Layers\end{tabular}} &
  \multicolumn{1}{c|}{\begin{tabular}[c]{@{}c@{}}Counterfactual \\ Ordering\end{tabular}} &
  \multicolumn{1}{c}{\begin{tabular}[c]{@{}c@{}}No Constrains\\ Linear Layers\end{tabular}} &
  \multicolumn{1}{c}{\begin{tabular}[c]{@{}c@{}}Counterfactual \\ Ordering\end{tabular}} \\ \midrule
\textbf{Factual}        & 0.4929           & 0.8637           & 0.6113           & 0.6417          & 0.3497           & 0.9758          \\
\textbf{Counterfactual} & 0.4698           & 0.9795           & 0.7152           & 0.9501          & 0.4103           & 0.9851          \\ \bottomrule
\end{tabular}}%
\caption{F1 score of counterfactual predictions for semi-synthetic German Credit Dataset with Treatment: Existing account status, Outcome: Synthetic; \& International Stroke Trial (IST) Dataset with Treatment: Aspirin, Outcome: Synthetic; Treatment: Heparin,  Outcome: Synthetic. See the Appendix for dataset description.}
\label{f1_account_ist_synthetic}
\end{table*}

\subsection{Answering RQ3}
\paragraph{Synthetic data}
We first evaluate whether we can accurately estimate the probabilities of causation---defined in the Appendix---on synthetic data. We test on data generated by an unconfounded as well as a confounded synthetic causal model, whose functional forms are outlined in the Appendix. Following the algorithms~\ref{algorithm: training},~\ref{algorithm: inference} we train a deep twin network on data from each case and enforce monotonicity. In both cases, we show accurate estimation. Results are in Table~3 and Fig.~3 in the Appendix.

\paragraph{Semi-synthetic data}
In Table \ref{f1_account_ist_synthetic} we show the results of our semi-synthetic experiments, described in the Appendix, for both the German Credit Datasest as well as the International Stroke Trial. Here, as the outcomes are synthetic, the ground truth is known a priori, hence we are able to calculate the associated F1 scores for each of the models. The counterfactually ordered models are more accurate at predicting both factual and counterfactual outcomes answering RQ3. Moreover, not enforcing counterfactual ordering leads to reduced performance in counterfactual estimation.



\paragraph{Real-world data}
We show 
performance on the Twin Mortality data of \cite{louizos2017causal} now, and discuss the Kenyan Water task from \cite{cuellar2020non} in the Appendix. In the case of Kenyan Water dataset, treatment ad outcome are binary, so we can compare the deep twin networks estimated counterfactual distributions to the uniquely identified counterfactual distribution via Corollary~\ref{corollary}. We report accurate estimation in Table~\ref{results_table_2} in the Appendix. 

In the Twin mortality dataset the goal is to understand 
the effect being born the heavier of the twins has on mortality one year after birth, given confounders regarding the health of the mother and background of the parents. 
Previous work addressed this with intervention queries. We use counterfactual queries---specifically the probabilities of causation. We follow~\cite{louizos2017causal,yoon2018ganite}'s preprocessing. As in \cite{louizos2017causal}, we treat each twin as the counterfactual of their sibling---providing a ground truth reported in \ref{results_table_2} in the Appendix. Again, monotonicity is justified here as we do not expect increasing birth weight to lead to reduced mortality.

First, given birth weight and mortality evidence provided by one twin, we aim to estimate the expected counterfactual outcome had their weight been different. That is, compute $\mathbb{E}(\text{Mortality}_\text{Weight} | \text{Mortality}^*, \text{Weight}^*, Z)$, where $Z$ are observed confounders. We achieve a counterfactual AUC-ROC of $86\%$ and F1 score of $83\%$. 
\cite{louizos2017causal} addressed this same question using only used interventional queries. That is, they computed $\mathbb{E}(\text{Mortality}_\text{Weight} | Z)$ and only achieve AUC $83\%$. We thus outperform \cite{louizos2017causal}'s AUC by $3\%$. Full results in Appendix Table \ref{results_table_2}. Here, by explicitly conditioning on and using the fact that the observed twins had birth weight and mortality, we are able to update our knowledge about the latent noise term of the other twin. Our improved AUC score showed using this allowed more accurate estimation of the ``hidden'' twins outcome. This cleanly illustrate the difference between interventions and counterfactuals. To give a comparison to prior work, we computed the average treatment effect from our model, yielding $-2.34\% \pm 0.019$ which matches \cite{louizos2017causal}---showing our model accurately estimates interventions as well as counterfactuals.

Table~\ref{results_table_2} reports our estimation of the Probabilities of Causation. Note that no previous work has computed these counterfactual distributions. Despite accurate estimation of the Probability of Sufficiency, and Necessity \& Sufficiency, our model underestimates the Probability of Necessity. This can be explained by a large data imbalance regarding the mortality outcome
---affecting the Probability of Necessity the most as mortality is the evidence conditioned here. Nevertheless, we correctly reproduce the relative sizes of the Probabilties of Causation, with Probability of Necessity an order of magnitude larger than the others.

\section{Conclusions}
We motivated and introduced \emph{counterfactual ordering}, a principle that posits desirable properties causal mechanisms should posses. We proved it is equivalent to causal mechanisms being monotonic. To learn such mechanisms, and perform counterfactual inference with them, we introduced \emph{deep twin networks}. We empirically tested our approach on real and semi-synthetic data, achieving \emph{accurate} counterfactual estimation that complies with domain knowledge. 

\bibliography{bibliography.bib}

\section{Acknowledgments}
We would like to acknowledge and thank our sources of funding and support for this paper. Funding for this work was received by Imperial College London and the  MAVEHA (EP/S013687/1) project and the UKRI London Medical Imaging \& Artificial Intelligence Centre for Value-Based Healthcare (A.V., B.K.). The authors also received GPU donations from NVIDIA. 

\section{Author Contributions}
A.V. and C.L. contributed in the theoretical formulations; A.V. developed the codebase and run the experiments; A.V, B.K. and C.L. contributed to the manuscript. 

\section{Competing Interests}
The authors declare no competing interests.

\paragraph{Data Availability}
All our datasets are publicly available and free to use for research purposes. The Kenyan water dataset originates from \cite{DVN28063_2015} licensed under a non commercial use clause and with the requirement for secure storage, both conditions have been fulfilled by the authors. The Twin Mortality dataset on the other hand was used as supplied by \cite{yoon2018ganite}. Finally the semi-synthetic and synthetic datasets can be replicated with the code provided

\paragraph{Code Availability}
Our codebase is available in \cite{Vlontzos2022} for public use under an MIT license.

\newpage

\newpage

\section*{Appendix}
\appendix

\section{Supplementary Results}

\subsection{Probabilities of Causation: Definitions}\label{prob_of_causation_def}
In our main text we assumed familiarity with the concept of the probabilities of causation, in order to make this work more accessible to those not familiar with the mathematical definition of the probabilities of causation we include the following discussion.

The probabilities of causation are important counterfactual queries that quantify the degree to which one event was a necessary or sufficient cause of another. 
Recently, variants on these have been used in medical diagnosis \cite{richens2020improving} to determine if a patient's symptoms would not have occurred had it not been for a specific disease. Here, the proposition binary variable $W$ is true is denoted $W=1$, and its negation, $W=0$, denotes the proposition $W$ is false.
\begin{enumerate}
    \item \textbf{Probability of necessity:} $$P(Y_{X=0}=0 \mid X=1, Y=1)$$
    The probability of necessity is the probability event $Y$ would not have
occurred without event $X$ occurring, given that $X,Y$ did in fact occur. 
    \item \textbf{Probability of sufficiency:} $$P(Y_{X=1}=1 \mid X=0, Y=0)$$
    The probability of sufficiency is the probability that in a situation where $X,Y$ were absent, intervening to make $X$ occur would have led to $Y$ occurring. 
    \item \textbf{Probability of necessity \& sufficiency:} $$P(Y_{X=0}=0, Y_{X=1}=1 \mid Z)$$
    The probability of necessity \& sufficiency quantifies the sufficiency and necessity of event $X$ to produce event $Y$ in context $Z$. As discussed in section~\ref{Section: two approaches to counterfactual inference}, joint counterfactual probabilities are well-defined.
\end{enumerate}

\subsection{Distinction between twin networks and abduction-action-prediction; Siamese Networks}

As was discussed in the main text, abduction-action-prediction counterfactual inference requires large computational resources. Twin networks were specifically designed to address this difficulty \cite{Pearl2009, balke1994counterfactual}. Indeed, consider the following passage from Pearl's ``Causality'' \cite[Section 7.1.4, page 214]{Pearl2009}:
\begin{quote}
``The advantages of delegating this computation [abduction] to inference in a Bayesian network [i.e., a twin network] are that the distribution need not be explicated, conditional independencies can be exploited, and local computation methods can be employed''
\end{quote}

This suggests that the computational resources required for counterfactual inference using twin networks can be less than in abduction-action-prediction. This was put to the test by \cite{grahamcopy} and shown empirically to be correct, with their abstract stating
\begin{quote}
``twin networks are faster and less memory intensive by orders of magnitude than standard [abduction-action-prediction] counterfactual inference'' 
\end{quote}

A key difference is that in a twin network, inference can be conducted in parallel rather than in the serial nature of abduction-action-prediction. For instance, sampling in twin networks is faster than in the abduction-action-prediction, as twin networks propagate samples simultaneously through the factual and counterfactual graphs---rather than needing to update, store and resample as in abduction-action-prediction. Thus, full counterfactual inference in a twin network can take up to no more than the amount of time sampling takes, while in abduction-action-prediction one incurs the additional cost of reusing samples and evaluating function values in the new mutilated graph. This is potentially advantageous for very large graphs, or for graphs with complex latent distributions that are expensive to sample.

Moreover, while twin networks appear to be conceptually similar to Siamese networks we highlight some key differences. Deep Twin Networks benefit from node merging in the intervention invariant variables while maintaining distinct information paths for the rest. As such there are parts of our network that are not completely shared between the factual and the counterfactual worlds. It is worth noting that \cite{reynaud2022d} implemented the Deep Twin Network as a Siamese network but had to resort to larger individual layers that could accommodate all the necessary information. Traditionally Siamese networks as the ones in  \cite{10.1007/978-3-030-58342-2_16} are used in classification tasks, where the inputs are items we wish to classify. In the case of the deep twin network the purpose is to regress the value of the causal outcome, with inputs signifying treatments. While architecturally the two approaches may coincide depending on implementation the purpose, design and use are distinct with the deep twin networks directly dependent on the causal graph of the phenomenon we investigate

\subsection{Description of Datasets Used }\label{appe_data_description}

In the German Credit Dataset the treatment is a four-valued variable corresponding to current account status, and the outcome is loan risk. The International Stroke Trial database was a large, randomized trial of antithrombotic therapy after stroke onset. The treatment is a three-valued variable corresponding to heparin dosage, and the outcome is a three-valued variable corresponding to different levels of patient recovery. In both cases we explore semi-synthetic settings, where the treatment and confounders are derived from the original dataset but the outcome is defined in a synthetic fashion. Synthetic outcomes allow us to determine the ground truth counterfactuals and probabilities of causation (see \ref{prob_of_causation_def} for definition).  
We also evaluate our algorithm with  real world outcome of the German Credit Dataset.

The Kenyan Water task is to understand whether protecting water springs in Kenya by installing pipes and concrete containers reduced childhood diarrhea, given confounders. First, monotonicity is a reasonable assumption here as protecting a spring is not expected to increase the bacterial concentration and hence increase the incidence of diarrhea. \cite{cuellar2020non} reported a low value for Probability of Necessity here---suggesting that children who developed diarrhea after being exposed to a high concentration of bacteria in their drinking water would have contracted the disease regardless. However, as there is no ground truth here, further studies reproducing this result with alternate methods are required to gain confidence in \cite{cuellar2020non}'s result. We follow the exact same data processing as in \cite{cuellar2020non} in order to ensure our results are comparable with literature,
The Kenyan water dataset originates from \cite{DVN28063_2015} lincenced under a non commercial use clause and with the requirement for secure storage, both conditions have been fulfilled by the authors. The data was preprocessed following  \cite{cuellar2020non}. 

For the Twin Mortality data, two versions were used. First databases provided by \cite{louizos2017causal} were processed to remove NaNs. No further processing was administered. This constituted the completely real version of the Twin Mortality dataset. However, as both \cite{louizos2017causal} and \cite{yoon2018ganite} process the their data to create a semi-synthetic task, in the spirit of proper comparison we used the data as processed and provided by \cite{yoon2018ganite}, with no additional processing. All datasets followed a train-test split of 70-30\%

\subsection{Unconfounded Synthetic example for RQ3}
\begin{equation} \label{equation: synthetic uncounfounded}
Y=
    \begin{cases}
      X  & \text{if}\ U_Y=0 \\
      0, & \text{if}\ U_Y=1 \\
      1, & \text{if}\ U_Y=2 \\
      
    \end{cases}
\end{equation}

Hence 
\begin{equation}
    P(N) = \frac{P(U_Y=0)}{P(U_Y=2)+P(U_Y=0)}
\end{equation}
\begin{equation}
    P(S) = \frac{P(U_Y=0)}{P(U_Y=1)+P(U_Y=0)}
\end{equation}
\begin{equation}
    P(NS) = P(U_Y=0)
\end{equation}

\subsection{Confounded Synthetic example for RQ3}
\begin{equation} \label{equation: synthetic counfounded rq1}
Y=
    \begin{cases}
      X \times Z, & \text{if}\ U_Y=0 \\
      0, & \text{if}\ U_Y=1 \\
      1, & \text{if}\ U_Y=2 \\
      
    \end{cases}
\end{equation}
where $$X:= U_x \oplus Z$$ 

Hence 
\begin{equation}
    P(N) = \frac{P(U_Y=0)P(Z=1)}{P(U_Y=2)+P(U_Y=0)P(Z=1)}
\end{equation}
\begin{equation}
    P(S) = \frac{P(U_Y=0)P(Z=1)}{P(U_Y=1)+P(U_Y=0)}
\end{equation}
\begin{equation}
    P(NS) = P(U_Y=0)P(Z=1)
\end{equation}

\subsection{Answering RQ3:}\label{appendix:rq1}

\subsubsection{Synthetic data experiments}
We test on both unconfounded and confounded causal models, with causal structure from Fig.\ref{examplescm} and Fig.\ref{figure: toy example} respectively.The data generation functions are defined in Equations~\ref{equation: synthetic counfounded rq1} and \ref{equation: synthetic uncounfounded} 
The functions remain monotonic in $X$. Given these, we construct synthetic datasets of $200000$ points split into training and testing under an $80-20$ split. The samples $U_y$ were drawn from either a uniform or a Gaussian distribution, depending on the experiment. Confounders $Z$ were taken from a uniform distribution. 
We opt for a high number of samples such that we do not bias our analysis due to small sample sizes. In the real world experiments the dataset sizes are smaller.  
Results for a trained twin network are in \ref{results_table_1}. 
We accurately estimate all Probabilities of Causation in both unconfounded and confounded cases when ground truth and candidate distributions are the same. In Figure 3 we also show performance of (a) unconfounded and (b) confounded cases as ground truth distribution of $U_Y$ in synthetic generating functions changes, but candidate training distributions remain fixed---showing robust estimation. 

\begin{figure*}[h]
        \centering

    \subfloat[
    ]{
        \includegraphics[width=0.45\linewidth]{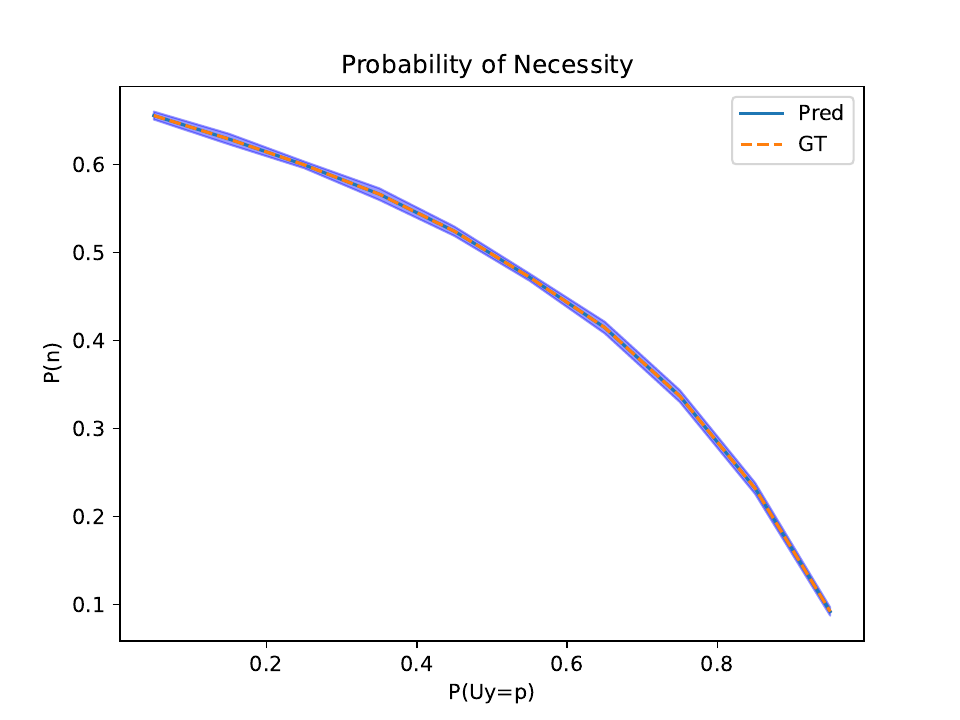}
        }
        \subfloat[
   ]{
        \includegraphics[width=0.45\linewidth]{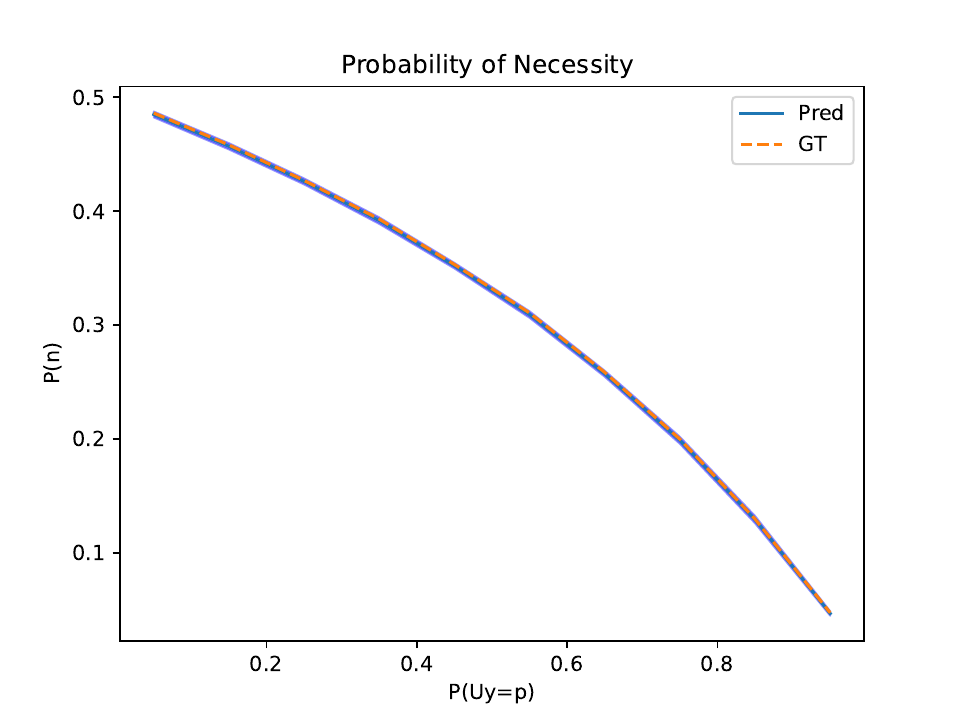}
        }
        
    \caption{Predicted \& ground truth Probability of Necessity as distribution of $U_Y$ varies in synthetic generating functions, but training distributions do not. Plots show robust estimation. (a) unconfounded, (b) confounded. Errors bars in both  \label{big figure: p_necessity_conf}}
   
\end{figure*}

\begin{table*}[]
\centering
\begin{tabular}{@{}lllll@{}}
\toprule
Method                                                             & $U_y$    & P(N)                                      & P(S)                                          & P(N\&S)                \\ \midrule
Synth Ground Truth                                                           & Uniform  & 0.5                                       & 0.5                                           & 0.33333                \\
Synth Twin Net          & Uniform  & $0.50214 \pm 0.00387 $                    & $ 0.50046 \pm 0.00631 $                       & $ 0.33449 \pm 0.00401$ \\
Synth w/ Conf Ground Truth & Gaussian        & 0.54706                         & 0.35512                                           & 0.27443                      \\
Synth w/ Conf Twin Net                                                 & Gaussian & $0.54563 \pm 0.00276$                     & $0.35177 \pm 0.00144 $                        & $0.27207 \pm 0.00125$  \\ \bottomrule
\end{tabular}

\caption{Results of Synthetic experiments. P(N): Prob. of Necessity; P(S): Prob. of Sufficiency; P(N\&S): Prob. of Necessity and Sufficiency. Our model achieves highly accurate estimations of the probabilities of causation on synthetic data.  \label{results_table_1}}
\end{table*}

\begin{table*}[]
\begin{minipage}[t]{1\textwidth}

\centering
\begin{tabular}{@{}lllll@{}}
\toprule
Method                                     & P(N)                   & P(S)                   & P(N\&S)   & AUC-ROC / F1           \\ \midrule
KW Median Child \textit{Cuellar et al. 2020}     & $0.12 \pm 0.01$        & -                      & -         &-             \\
KW TN Median Child                          & $0.13598 \pm 0.049$  & $0.09811 \pm 0.031 $ & $0.31778 \pm 0.012$ &- \\
KW TN Test Set                             & $0.06273 \pm 0.020 $ & $0.03914 \pm 0.016 $ & $0.08521 \pm 0.034 $&- \\ \midrule
Twin Mortality Ground Truth                         & 0.33372                & 0.01011                & 0.01353     & $0.83$/- \textit{Louizos et al. 2017}        \\
TM TN Test Set                             & $0.12241 \pm 0.019$  & $0.01401 \pm 0.003 $ & $0.01174 \pm 0.002$ & $0.86$/$0.83$ \\\bottomrule

\end{tabular}
\caption{Results of Kenyan Water (KW) \& Twins Mortality (TM) with Twin Network (TN), P(N): Prob. of Necessity; P(S): Prob. of Sufficiency; P(N\&S): Prob. of Necessity \& Sufficiency. In KW we agree \& improve on \cite{cuellar2020non}. In TM we overestimate P(N), but report accurate P(S) \& P(N\&S), \& better AUC than \cite{louizos2017causal}. \label{results_table_2}}
\end{minipage}
\end{table*}

\subsubsection{Real world data experiments}

\paragraph{\emph{Kenyan Water dataset:}}
The Kenyan Water task is to understand whether protecting water springs in Kenya by installing pipes and concrete containers reduced childhood diarrhea. First, monotonicity is reasonable here as protecting a spring is not expected to increase the bacterial concentration and hence increase the diarrhea incidence. \cite{cuellar2020non} reported a low value for Probability of Necessity here---suggesting that children who developed diarrhea after being exposed to a high concentration of bacteria in their drinking water would have contracted the disease regardless. However, as there is no ground truth, further studies reproducing this result with alternate methods are required to gain confidence in \cite{cuellar2020non}'s result. We follow the same data processing as in \cite{cuellar2020non}, detailed in the Appendix. 
Our findings are in Table~\ref{results_table_2} of the Appendix and agree with \cite{cuellar2020non} on Probability of Necessity. Moreover, unlike \cite{cuellar2020non}, we can also compute Probability of Sufficiency and Probability of Necessity and Sufficiency. We can thus offer a more comprehensive understanding of the role protecting water springs plays in childhood disease. Our results show that exposure to water-based bacteria is not a necessary condition to exhibit diarrhea and it is neither a sufficient, nor a necessary-and-sufficient condition. This provides further evidence that protecting water springs has little effect on the development of diarrhea in children in these populations, indicating the source of the disease is not related to water. 

\subsection{Determining monotonicity direction}\label{Appendix:rq2 results}

In Table~\ref{current_acount_monot_a} we provide the ATE of a semi-synthetic existing account status from the German Credit Dataset \cite{Dua:2019} with a fully synthetic outcome defined in \ref{synthetic_outcome_def}. We observe that for a control treatment $0$ changing the treatment to $[1,2]$ the ATE increases, indicating a monotonic increasing relationship. In addition, we could observe the interventional probabilities where as we increase the value of the treatment the probability of a higher outcome increases, we show this in the Appendix's Table~\ref{current_acount_monot_b}. This reinforces our beliefs regarding the type of monotonicity. 
 \begin{table}[t]
    \centering
    \resizebox{.6\columnwidth}{!}{
   \begin{tabular}{@{}llll@{}}
\toprule
\textbf{ATE} & \textbf{0} & \textbf{1} & \textbf{2} \\ \midrule
\textbf{0}   & 0          & 0.3059     & 0.8914     \\
\textbf{1}   & -0.3059    & 0          & 0.5854     \\
\textbf{2}   & -0.8914    & -0.5854    & 0          \\ \bottomrule
\end{tabular}}
   \caption{ Treatment: Semi-Synthetic Existing account status, Outcome: Synthethic. As the change from $0$ to $1 \& 2$ has a positive ATE, the relationship is increasing monotonic}
    \label{current_acount_monot_a}
\end{table}

 Similarly, Tables~\ref{existing_acount_monot_risk_a_ate},~\ref{existing_acount_monot_risk_b_py_do_x} include the ATE and the interventional probabilities for a real world variant of the above dataset where the treatments are again the current account status of the individual but the outcome is their classification as good or bad risk.  At this point, we may call upon our domain knowledge and determine if the break in the monotonic trend is due to noisy observations or a different ordering of the treatments. As this is a real world dataset in which the outcome attribution is inherently noisy we observe an outlier behavior from treatment $1$. Upon closer inspection we observe that treatment $1$ corresponds to a negative balance in the individuals checking account while treatment $0$ indicates no existing checking account. As such one could either switch the treatment ordering to obey the monotonicity, or in the case that this break in monotonicity is suspected to be due to noisy data one could enforce prior knowledge-based monotonicity. Here, we follow our prior knowledge and attribute the break of monotonicity to noise. 
 Our reasoning is based on the fact that an individual without a prior credit account is a larger unknown for a financial institution.

\begin{table}[t]
    \centering
    \resizebox{.65\columnwidth}{!}{
   \begin{tabular}{llll}
\hline
\textbf{$P(Y|do(X))$} & \textbf{0.0} & \textbf{1.0} & \textbf{2.0} \\ \hline
\textbf{0}         & 0.6396       & 0.2056       & 0.1548       \\
\textbf{1}         & 0.4635       & 0.2518       & 0.2847       \\
\textbf{2}         & 0.1656       & 0.2620       & 0.5723       \\ \hline
\end{tabular}}
    \caption{Same data as Table~\ref{current_acount_monot_a}. Rows are treatments, and columns are outcomes}
    \label{current_acount_monot_b}
\end{table}

\begin{table}[]
    \centering
   \begin{tabular}{@{}lllll@{}}
\toprule
\textbf{ATE} & \textbf{0} & \textbf{1} & \textbf{2} & \textbf{3} \\ \midrule
\textbf{0}         & 0          & -0.0791    & 0.1029     & 0.2174     \\
\textbf{1}         & 0.0791     & 0          & 0.1820     & 0.2965     \\
\textbf{2}         & -0.1029    & -0.1820    & 0          & 0.1145     \\
\textbf{3}         & -0.2174    & -0.2965    & -0.1145    & 0          \\ \bottomrule
\end{tabular}
    \caption{Treatment: Account status, Outcome: Risk Status}
    \label{existing_acount_monot_risk_a_ate}
\end{table}

\begin{table}[]
    \centering
   \begin{tabular}{@{}lll@{}}
\toprule
\textbf{$P(Y|do(X))$} & \textbf{0.0} & \textbf{1.0} \\ \midrule
\textbf{0}         & 0.1919       & 0.8081       \\
\textbf{1}         & 0.5450       & 0.4549       \\
\textbf{2}         & 0.3336       & 0.6663       \\
\textbf{3}         & 0.1265       & 0.8735       \\ \bottomrule
\end{tabular}
    \caption{ $P(Y|do(X))$ of the same dataset, rows indicate treatments while columns outcomes}
    \label{existing_acount_monot_risk_b_py_do_x}
\end{table}

\subsection{Answering RQ1, RQ2}\label{appendix:unconstrained_ps}

\subsubsection{Synthetic Outcome For German Credit Score data} \label{synthetic_outcome_def}
\begin{equation} \label{equation: synthetic counfounded}
Y=
    \begin{cases}
      X + Z  & \text{if}\ U_Y=0 \\
      0, & \text{if}\ U_Y=1 \\
      X * Z, & \text{if}\ U_Y=2 \\
           2, & \text{if}\ U_Y=3 \\
                  1, & \text{if}\ U_Y=4 \\
                        \text{step}(X-1), & \text{if}\ U_Y=5 \\
                              2*\text{step}(X-1), & \text{if}\ U_Y=6 \\
    \end{cases}
\end{equation}
where the treatment $X$ and the confounders $Z$ span the range $X,Z \in [0,2]$. Step is the Heaviside step function. In our experimentation $U_Y$ was drawn from a uniform distribution

\subsubsection{Synthetic Outcome for Internatinal Stroke Trial}\label{syntehtic_aspirin_def}
For $X$ the dosage of aspirin treatment, as detailed in the IST Dataset, and counfouders SEX $:=$ biological sex of patient, AGE$:=$ age of patient thresholded at 71 years, CONSC~$:=$ level of consciousness the patient arrived in hospital with. $Y =1/(1+e^{-g})$ with $g$ being given by:

\begin{equation}
\begin{aligned}
     g = X + \text{SEX} + &0.2*(\text{CONSC}-1) \\
     &+ 0.5*X * \text{SEX}*\text{AGE}  + U_y
\end{aligned}
\end{equation}

 \begin{table}[t]
\centering
\resizebox{1\columnwidth}{!}{
\begin{tabular}{rllll}
\toprule
   P & 0.0                  & 1.0                  & 2.0                  & 3.0                  \\
\midrule
    0 & 0                    & $0.4773 \pm 0.0182 $ & $0.4243 \pm 0.0272 $ & $0.3894 \pm 0.0147 $ \\
    1 & $0.6049 \pm 0.0386 $ & 0                    & $0.5791 \pm 0.0340 $ & $0.5616 \pm 0.0183 $ \\
    2 & $0.6081 \pm 0.0388 $ & $0.6025 \pm 0.0806 $ & 0                    & $0.5221 \pm 0.0130 $ \\
    3 & $0.6265 \pm 0.0297 $ & $0.6580 \pm 0.0431 $ & $0.5188 \pm 0.0272 $ & 0                    \\
\bottomrule
\end{tabular}}
\caption{Switched counterfactual ordering -- Probability of counterfactual  $P(T,T')=P(Y_{X=T'}=1 \mid X=T, Y=0)$  -- columns and rows are Treatments -- We observe counter-intuitive probabilities of necessity as the lower triangular sub-matrix has higher probabilities than the upper triangular \label{weird_order_p_c}}
\end{table}

 \begin{table}[t]
\centering
\resizebox{1\columnwidth}{!}{
\begin{tabular}{rlll}
\toprule
   P                 & 1.0                  & 2.0                  \\
\midrule
        0 & 0                    & $0.1260 \pm 0.0070 $ & $0.0000 \pm 0.0000 $ \\
        1 & $0.0000 \pm 0.0000 $ & 0                    & $0.2262 \pm 0.0429 $ \\
        2 & $0.0000 \pm 0.0000 $ & $0.0000 \pm 0.0000 $ & 0                    \\
\bottomrule
\end{tabular}}
\caption{$P=P(Y_{X=1}=Column|Y_{X=0}=Row)$.}
\label{account-synthetic-probs-of-nec}
\end{table}
 
\begin{table}[t]
\centering
\begin{tabular}{rrrr}
\toprule
   P &    0.0 &    1.0 &    2.0 \\
\midrule
               0 & 0.0000 & 0.1190 & 0.0079 \\
               1 & 0.0000 & 0.0000 & 0.2037 \\
               2 & 0.0000 & 0.0000 & 0.0000 \\
\bottomrule
\end{tabular}
\caption{$P=P(Y_{X=1}=Column|Y_{X=0}=Row)$.}
\label{account-synthetic-probs-of-nec-gt}
\end{table}

\

\begin{table}[t]
\centering
\resizebox{1\columnwidth}{!}{
\begin{tabular}{rlll}
\toprule
   P                  & 1.0                  & 2.0                  \\
\midrule
        0 & 0                    & $0.0482 \pm 0.0006 $ & $0.1840 \pm 0.0001 $ \\
        1 & $0.0011 \pm 0.0019 $ & 0                    & $0.1130 \pm 0.0069 $ \\
        2 & $0.0000 \pm 0.0000 $ & $0.0135 \pm 0.0058 $ & 0                    \\
\bottomrule
\end{tabular}}
\caption{ $P=P(Y_{X=1}=Column|Y_{X=0}=Row)$.}
\label{heparin-synthetic-preds}
\end{table}

\begin{table}[t]
\centering
\begin{tabular}{rrrr}
\toprule
   P &    0.0 &    1.0 &    2.0 \\
\midrule
               0 & 0.0000 & 0.0266 & 0.0917 \\
               1 & 0.0000 & 0.0000 & 0.0911 \\
               2 & 0.0000 & 0.0000 & 0.0000 \\
\bottomrule
\end{tabular}
\caption{$P=P(Y_{X=1}=Column|Y_{X=0}=Row)$.}
\label{heparin-synthetic-preds-gt}
\end{table}

\subsubsection{Treatment: Existing Account Status , Outcome: Risk, switched ordering}
In Table \ref{weird_order_p_c} we show switching the ordering of treatment 0 and 1 leads to non-intuitive results akin to no constraints. 

\subsubsection{Treatment: Semi-Synthetic Account Status, Outcome: Synthetic }
 Tables \ref{account-synthetic-probs-of-nec},\ref{account-synthetic-probs-of-nec-gt} show counterfactual probabilities from our method applied to the semi-synthetic account status treatment and synthetic outcome. We note that our model slightly violates the monotonicity constraints by providing non zero probabilities to two cases where they should be 0. However, both of these are within our acceptable experimental error, with one being less than $1\%$, the other being just over $1\%$

 \subsubsection{Treatment: Heparin, Outcome: Synthetic}
  Tables \ref{heparin-synthetic-preds},\ref{heparin-synthetic-preds-gt} show the counterfactual probabilities for the semi-synthetic heparin treatment and synthetic outcome.

\end{document}